\documentclass{article} 
\usepackage{iclr2027_conference,times}

\usepackage{amsmath,amsfonts,bm}

\def\eqref#1{equation~\ref{#1}}

\def\1{\bm{1}}

\DeclareMathAlphabet{\mathsfit}{\encodingdefault}{\sfdefault}{m}{sl}
\SetMathAlphabet{\mathsfit}{bold}{\encodingdefault}{\sfdefault}{bx}{n}

\usepackage{appendix}
\usepackage{hyperref}
\usepackage[nameinlink]{cleveref}
\crefname{equation}{Eq.}{Eqs.}
\crefname{enumi}{Question}{Questions}
\crefname{appendix}{Appendix}{Appendices}
\crefname{assumption}{Assumption}{Assumptions}
\crefname{lemma}{Lemma}{Lemmas}
\crefname{section}{Section}{Sections}
\Crefname{appendixsubsection}{Appendix}{Appendices}
\crefname{appendixsubsection}{appendix}{appendices}

\crefname{figure}{Fig.}{Figs.}

\AddToHook{cmd/appendix/before}{%
    \crefalias{section}{appendix}%
}

\usepackage{url}
\usepackage[most]{tcolorbox}
\usepackage{amsthm}

\newtheorem{theorem}{Theorem}[section]

\newtheorem{lemma}[theorem]{Lemma}
\newtheorem{proposition}[theorem]{Proposition}

\theoremstyle{definition}
\newtheorem{definition}[theorem]{Definition}
\newtheorem{assumption}[theorem]{Assumption}

\theoremstyle{remark}

\title{The Linear Representation Hypothesis \\ for Vision-Language-Action Models}

\author{
Minseok Jeong$^{1}$ \quad
Hyewon Choi$^{1}$ \quad
Hiroyasu Tsukamoto$^{2}$ \quad
Soojean Han$^{1}$ \\
\vspace{-0.25em}
{\footnotesize
$^{1}$KAIST, Daejeon, Korea
\qquad
$^{2}$University of Illinois Urbana--Champaign
} \\
\vspace{-0.45em}
{\footnotesize
\texttt{\{dsa950115,chw110825,soojean\}@kaist.ac.kr}
\quad
\texttt{hiroyasu@illinois.edu}
}
}

\iclrfinalcopy
\begin{document}

\maketitle
\lhead{}

\begin{abstract}
The \emph{linear representation hypothesis} (LRH) has become a standard lens for measuring and intervening on semantic information through the internal representations of large language models (LLMs). A growing body of work has begun extending this perspective to vision-language-action (VLA) models, but the dynamical nature of embodied interaction introduces an additional challenge. Unlike semantic attributes commonly studied in LLMs, such as gender or language, a physical quantity of interest (QoI) in a VLA evolves jointly with the system dynamics: the representation influences the actions selected by the policy, which alter the physical state and, in turn, the next representation.

In this paper, we develop a theoretical, \emph{signature}-based formulation of the LRH for VLA that unifies representations and policies. On the representation side, we establish the existence of representations from which the future evolution of a QoI under a candidate action trajectory can be recovered via \emph{linear probing}. On the policy side, we introduce a \emph{signature generalized linear model} for stochastic action chunks. This structure yields a monotonic change in the expected future QoI along linear paths in natural parameter space, enabling \emph{linear steering}. We construct an explicit oracle representation in a planar control-affine navigation experiment and verify the predicted linear probing and steering mechanisms.
\end{abstract}

\section{Introduction}
The linear representation hypothesis (LRH) in large language models (LLMs) states that high-level concepts, such as gender and language, have representations that occupy particular linear directions, or low-dimensional subspaces, in their representation space~\citep{mikolov2013linguistic,elhage2022superposition,park2024linear}. Its appeal is that this geometry provides a simple mathematical interface to otherwise opaque models. After identifying a linear representation corresponding to a concept, it can be used in two ways: \emph{probing} measures the concept by projecting representations onto that direction~\citep{alain2017probing,belinkov2022probing}, while \emph{steering} intervenes on the concept by moving representations along it~\citep{li2023iti,turner2023actadd,zou2023repe}. These two operations have become a standard interface to understand the internal structures of LLMs, supported by substantial empirical evidence~\citep{li2023othello,marks2024geometry,templeton2024scaling} and, more recently, by systematic theoretical analysis~\citep{park2024linear, jiang2024origins, park2025geometry, garg2026features}.

It is natural to seek the same interface for robot control with vision- language-action (VLA) models, where the motivation is even stronger. VLA models~\citep{brohan2023rt2,kim2024openvla,octo2024,black2024pi0} are large, expensive to fine-tune, and trained on demonstration data that is costly to collect~\citep{oneill2024openx,khazatsky2024droid}; retraining is rarely an option, e.g., when a policy must be made more cautious or when one needs to understand what the policy has inferred from the scene. The representation space offers a cheaper point of access. If task-relevant physical information is encoded in a representation space and can be measured, then we can use it without additionally instrumenting the environment or modifying the model weights. If behavior is steerable, it can be adjusted at inference time without collecting new robot demonstrations. Recent work has increasingly built on this premise~\citep{lu2025probing,haon2025mechanistic,zhang2026frozen,miao2026coast}, bringing much of the LLM-side LRH framework into VLAs: physical concepts, their representations, and the probing and steering.

But the objects of interest are not quite the same. In an LLM, commonly probed semantic directions, such as gender  \(\texttt{Rep}(\text{``Female''})-\texttt{Rep}(\text{``Male''})\) or language  \(\texttt{Rep}(\text{``English''})-\texttt{Rep}(\text{``French''})\), are relatively static. In a VLA, by contrast, many quantities of interest pertain to physical systems and, crucially, their evolutions: the pose of a manipulated object, or the force at a contact.
A fixed representation direction may suffice to read out such a quantity at the current state, but decision-making often depends on how that quantity changes under action. For example, what matters is whether the object will reach a desired height or whether the contact force will not become excessive throughout task execution. Such future quantities depend not only on the current representation, but also on the action trajectory executed from that state and the dynamics through which those actions unfold. 
Although recent empirical studies on VLA models have shown promising results~\citep{lu2025probing,buurmeijer2026observingcontrollingfeaturesvisionlanguageaction}, a rigorous theoretical framework for characterizing this dynamical coupling remains largely undeveloped relative to the LRH literature for LLMs.

\begin{figure}
    \centering
    \includegraphics[width=0.94\linewidth]{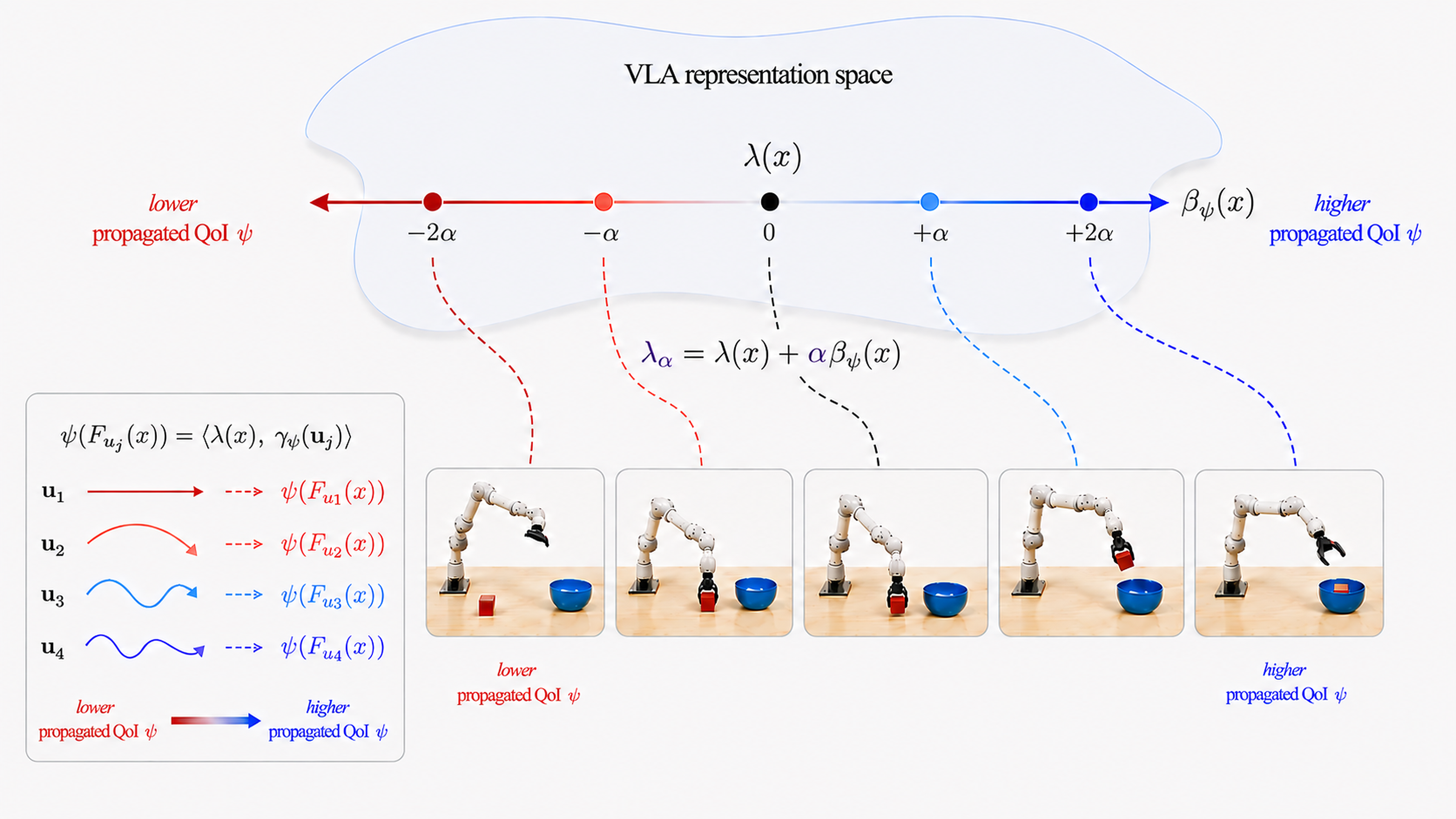}
    \caption{\textbf{Linear probing and steering in VLA representations.} Given a current state $x$ with representation $\lambda(x)$, candidate action trajectories $\mathbf{u}_j$ define action-conditioned linear probes that predict the propagated quantity of interest (QoI), $\psi(F_{\mathbf{u}_j}(x))=\langle \lambda(x), \gamma_\psi(\mathbf{u}_j)\rangle$. Steering the representation along $\lambda_\psi(x)$, via $\lambda(x)\mapsto\lambda(x)+\alpha\lambda_\psi(x)$, monotonically shifts the propagated QoI, allowing the policy to favor lower or higher physical outcomes.}
    \label{fig:placeholder}
\end{figure}

We therefore develop a mathematical abstraction of VLA models centered on the representation--action--dynamics interface. Our contributions are as follows.



\begin{enumerate}
    \item \textbf{A control-theoretic formulation of the LRH for VLA models.}
    We connect VLA models to the underlying controlled dynamical system, leading to VLA-specific notions of \emph{probing} and \emph{steering}. Building on these notions, we formulate the LRH for VLA models and identify the key challenges arising from their architectural differences from LLMs.
    
    \item \textbf{Shared linear representations for linear probing.} We show constructively that an ideal VLA representation can simultaneously encode the information of multiple physical quantities of interest, with each propagated value under a candidate action trajectory admitting a \emph{linear probe} in representation space.

    \item \textbf{Generalized linear policies for linear steering.}
    We introduce a path signature-based exponential family formulation of VLA policies, whose geometry provides a dually flat statistical manifold. Building on this geometry, we show that  \emph{linear probing} gives rise to \emph{linear steering} in the representation space.

    \item \textbf{Explicit numerical verification.} 
    To provide a concrete picture, we construct an oracle representation that realizes our theoretical framework. We then numerically illustrate the resulting linear probing and steering mechanisms.
\end{enumerate}
   


\paragraph{Basic background on VLA models.}
We consider a minimal abstraction of a VLA model. Let \(x\) denote the underlying physical state, which is not directly accessible to the model. Instead, the VLA receives an observation \(o = h(x)\), which may include visual inputs, multiple camera views, or other sensor modalities. Given a language instruction \(\ell\), the VLA backbone maps \(o\) to an internal representation \(\bar{\lambda}(o,\ell)\). Throughout the paper, we fix \(\ell\) and suppress its dependence. Motivated by recent work on physically meaningful visual representations~\citep{bardes2024vjepa,assran2025vjepa2}, we conjecture that such information is 
accessible from the VLA representation. We therefore define \(\lambda(x) := \bar{\lambda}(h(x),\ell)\). The representation \(\lambda(x)\) is then passed to a stochastic action policy \(\pi\), such as a diffusion-based action-chunk policy. For our analysis, we model the resulting action chunk as a continuous-time open-loop trajectory over a horizon \(H\), \(\mathbf{u} := \{\mathbf u(t)\}_{t\in[0,H]} \sim \pi(\cdot\mid\lambda(x))\). The resulting VLA pipeline is summarized as
\begin{equation*}
\boxed{
x
\xrightarrow{h}
o
\xrightarrow{\bar{\lambda}(\cdot,\ell)}
\lambda(x)
\xrightarrow{\pi}
\mathbf{u}\sim\pi(\cdot\mid\lambda(x)).}
\end{equation*}
We ask whether the LRH extends to this largely unexplored architectural setting.

\section{A Control-Theoretic Formulation of VLA Models}
\subsection{VLA Models as Controlled Dynamical Systems}

We now connect the VLA model to the underlying physical dynamics by interpreting
the action trajectory \(\mathbf{u}\) as a control input to a nonlinear control-affine system:
\begin{subequations}\label{eq:controlled_system}
\begin{align}
    \dot{x}(t)
    &=
    f_0(x(t))
    +
    \sum_{i=1}^{m} f_i(x(t)) u^i(t)
    \label{eq:controlled_system_state}
    \\
    o(t)
    &=
    h(x(t))
    \label{eq:controlled_system_output}
\end{align}
\end{subequations}
Here, \(x(t) \in \mathcal{X}\subseteq \mathbb R^n\) denotes the unobserved physical state,
\(\mathbf u(t) = (u^1(t),\ldots,u^m(t)) \in \mathcal{U}\subset \mathbb{R}^m\) denotes the control input, \(f_0\) is the drift vector field, \(f_i\) is the control vector field associated with the \(i\)-th control input, and \(h:\mathcal{X} \to \mathcal{O}\subseteq \mathbb{R}^o\) is the observation map. Let \(\mathcal{U}_H\) denote the set of admissible open-loop control paths
\(\mathbf{u}:[0,H]\to\mathcal{U}\). For each
\(\mathbf{u}\in\mathcal{U}_H\), we denote by
\(F_{\mathbf{u}}:\mathcal{X}\to\mathcal{X}\) the corresponding
state-transition map over the horizon \(H\), defined by $x(H)=F_{\mathbf{u}}(x(0))$.

We next define a hidden scalar quantity $\psi:\mathcal{X}\to\mathbb{R}$ on the physical state space analogous to a high-level concept in the LLM setting.  


\begin{definition}[Quantity of Interest (QoI)]
Let \(\psi:\mathcal{X}\to\mathbb{R}\) be a real-valued function of the physical state \(x\). For example, \(\psi(x)=x_i\) may extract the \(i\)-th component of the state, such as a coordinate of a robot's position. 
More generally, \(\psi\) may represent any physically meaningful scalar quantity derived from \(x\), such as a control Lyapunov function or a value function.\footnote{
This general notion of a QoI may also support the study of safety and planning in representation space.
}
\end{definition}

\subsection{Probing and Steering in VLA Models}
We now adapt the notions of \emph{probing} and \emph{steering} to the VLA setting.
Unlike LLM representations, VLA representations are coupled to actions that induce physical state transitions through interaction with the environment. We must therefore reformulate probing and steering to reflect this interaction.

For \emph{probing}, we consider a physically meaningful QoI \(\psi(x)\) and ask
whether its future value under a candidate action trajectory $\mathbf{u}$ can be inferred from the VLA representation.

\begin{tcolorbox}[
    title=\textbf{Probing Problem},
    colback=white,
    colframe=black,
    boxrule=0.6pt,
    arc=1.5mm
]
Given a QoI \(\psi:\mathcal{X}\to\mathbb{R}\) and a candidate action trajectory \(\mathbf{u}\in\mathcal{U}_H\), can the propagated QoI
\[
\psi\!\left(F_{\mathbf{u}}(x)\right)
\]
be predicted using only the VLA representation \(\lambda(x)\) and \(\mathbf{u}\), without explicit access to the underlying physical state or dynamics?
\end{tcolorbox}

A representation is informative for probing if, together with \(\mathbf{u}\), it is sufficient to predict the action-conditioned propagated QoI \(\psi(F_{\mathbf{u}}(x))\). 

Existing VLA probing methods focus on static QoI, asking whether \(\psi(x)\) can be decoded directly from the internal representation~\citep{lu2025probing,bhardwaj2026decoding}, with promising empirical results.
However, static probing does not capture the action-conditioned evolution of the physical quantity. This distinction is particularly important for VLA models, whose latent representations both generate state-transitioning actions and provide a space for steering them~\citep{haon2025mechanistic}.\footnote{This is closely related to LeCun's view that an agentic system should be able to predict the consequences of contemplated actions \citep{lecun2022path}. A similar perspective is embodied in explicit world-model formulations, such as \citep{ha2018world}. Importantly, however, such predictive capability need not rely on an explicit world model and may also emerge implicitly within VLA models.}


For \emph{steering}, we ask whether an intervention in the representation space
can alter the action trajectory generated by the policy so as to induce a desired change in
the propagated physical state.

\begin{tcolorbox}[
    title=\textbf{Steering Problem},
    colback=white,
    colframe=black,
    boxrule=0.6pt,
    arc=1.5mm
]
Given a QoI \(\psi:\mathcal{X}\to\mathbb{R}\), can we modify the VLA representation $\lambda(x)$ so that the resulting policy generates an action trajectory $\widetilde{\mathbf{u}}
\sim
\pi\!\left(\cdot\mid\mathtt{Steer}(\lambda(x))\right)$ that drives the propagated QoI
\[
\psi\!\big(F_{\widetilde{\mathbf{u}}}(x)\big)
\]
toward a desired value?
\end{tcolorbox}

In general, a steering map \(\lambda(x) \mapsto \texttt{Steer}(\lambda(x))\) need not admit a simple algebraic form.
Recent work on VLA steering has therefore explored more structured interventions on internal representations, including conceptor-based subspace operators and learned distribution-matching maps that go beyond a single additive steering direction~\citep{miao2026coast,khayatan2026dimas}.

Together, these two problems characterize the role of the representation \(\lambda\) in a VLA model: \emph{probing} asks whether the representation contains sufficient information to predict the effect of a candidate action, whereas \emph{steering} asks whether modifying the representation can systematically
alter that effect through the policy.

\section{Linear Probing and Steering in VLA Models}

The linear representation hypothesis posits that both problems above admit a common resolution: linear structure in the VLA representation space. We extend the LRH for LLMs~\citep{park2024linear} to VLA models by characterizing two complementary notions: \emph{linear probing} and \emph{linear steering}.

\begin{definition}[Linear Probing in VLA]
Let \(\psi:\mathcal{X}\to\mathbb{R}\) be a QoI. We say that \(\psi\) admits
linear probing in the representation \(\lambda:\mathcal{X}\to\mathbb{R}^d\) if
there exists a map $\gamma_\psi:\mathcal{U}_H\to\mathbb{R}^d$  such that
\begin{equation*}
\psi\!\left(F_{\mathbf{u}}(x)\right)
=
\left< \lambda(x), \gamma_\psi(\mathbf{u})\right>_{\mathbb{R}^d},
\qquad
\forall\,x\in\mathcal{X},\ \mathbf{u}\in\mathcal{U}_H.
\end{equation*}
\end{definition}
Note that this structure differs from the LLM probing setting. There, the probe is typically given by a single global direction $\gamma_\psi\in \mathbb R^d$, and the QoI is estimated from that direction alone; in our setting, the corresponding object depends on the action trajectory $\mathbf{u}$. Such dependence again reflects the interaction structure discussed above. The usual static probing setting is recovered by restricting to a no-transition case, in which action dependence is no longer required. 


\paragraph{Challenges in linear probing.}
In the LLM setting, the relevant linear
measurement direction $\gamma_{\psi}$ is naturally tied to the rows of the output projection (or unembedding) matrix preceding the softmax. In contrast, VLA models do not provide an analogous, predefined measurement structure for propagated physical QoIs. A key challenge is therefore to characterize an action-conditioned map $\gamma:\mathcal{U}_H\to\mathbb{R}^d$ that associates action trajectory with the representation space.


\begin{definition}[Linear Steering in VLA]
Let \(\psi:\mathcal{X}\to\mathbb{R}\) be a QoI. We say that \(\psi\) admits
linear steering at \(x\in\mathcal{X}\) if there exists a nonzero vector \(\lambda_\psi(x)\in\mathbb{R}^d\) such that
\[
\alpha \longmapsto
\mathbb{E}_{\mathbf{u}\sim
\pi(\cdot\mid\lambda(x)+\alpha\lambda_\psi(x))}
\left[
    \psi\!\left(F_{\mathbf{u}}(x)\right)
\right]
\quad\text{is nondecreasing.}
\]
\label{def:linear_steering}
\end{definition}

In our formulation, unlike standard LLM steering approaches that rely on a fixed global direction $\lambda_\psi \in \mathbb{R}^d$, the steering direction $\lambda_\psi(x)$ is generally state-conditioned. This is a natural extension in the VLA setting, since the propagated QoI $\psi(F_{\mathbf u}(x))$ depends on both the current state and the action trajectory through the system dynamics; consequently, the same change in action can have different effects on the propagated QoI at different states.

\paragraph{Challenges in linear steering.}
To control a physical QoI through a linear translation \(\lambda \mapsto \lambda+\alpha\lambda_\psi\), we need to understand how such an intervention changes the downstream policy distribution and, consequently, the resulting physical behavior. To characterize this relationship, we adopt the information-geometric perspective of the \emph{Softmax} \citep{park2026information}, viewing $\lambda \mapsto \pi(\cdot\mid\lambda)$ as a parameterization of a family of VLA policies. This motivates us to seek a generalized linear model (GLM) parameterization of the VLA policy under which the geometry induced by variations in $\lambda$ admits an explicit characterization. However, owing to their structural differences from LLMs, modern generative VLA policies do not naturally admit such a convenient parameterization.

\section{Existence of Representation and Measurement Maps}
\label{sec:probing}
In this section, we take a closer look at \emph{linear probing} in VLA models and address two fundamental theoretical questions as follows:

\begin{enumerate}
    \item \label{q:shared-representation}
    Can the VLA representation \(\lambda\) capture the physical information necessary to characterize multiple quantities of interest, \(\Psi=\{\psi_j\}_{j=1}^p\)?

    \item \label{q:measurement-map}
    For each QoI \(\psi_j\), can we construct an action-conditioned measurement
    map \(\gamma_{\psi_j}\) such that the propagated QoI admits a linear
    probe from the VLA representation \(\lambda\)?
\end{enumerate}


Before stating our main result, we introduce a key object used in
its construction. Analogous to the role of unembedding vectors in the
softmax-based measurement of LLMs, the action-conditioned measurement map
\(\gamma\) can be chosen to factor through a \emph{signature} of the action trajectory.
\begin{definition}[Signature of an Action Trajectory]
Let \(\mathcal{A}:=\{0,1,\ldots,m\}\) denote the alphabet of action indices, with
\(u^0(t)\equiv 1\), and let \(\mathcal{A}^{\star}\) denote the set of all finite words (or multi-indices) over \(\mathcal{A}\), including the empty word \(\emptyset\). For a word \(I=(i_1,\ldots,i_k)=:i_1\cdots i_k\in\mathcal{A}^{\star}\), define
\begin{equation*}
\label{def:signature}
S(\mathbf{u})^I_{0,H}
:=
\int_{0\le t_1\le\cdots\le t_k\le H}
u^{i_1}(t_1)\cdots u^{i_k}(t_k)
\,\mathrm{d}t_1\cdots \mathrm{d}t_k,
\end{equation*}
with \(S(\mathbf{u})^{\emptyset}_{0,H}=1\). The collection $S(\mathbf{u}) := \bigl(S(\mathbf{u})^I_{0,H}\bigr)_{I\in\mathcal{A}^{\star}}\in \mathcal{S}$ is called the \emph{signature} of $\mathbf u$.
\end{definition}
The signature provides a principled representation of a continuous-time path,
encoding its geometric and sequential structure through iterated integrals.
It has been widely studied in mathematics, machine learning, and control
~\citep{lyons1998differential,kidger2019deep,kidger2020neural,bank2025stochastic}. We refer the reader to~\citep{chevyrev2026primer} for further details.

To address the challenges in linear probing described above, we interpret the signature \(S:\mathcal{U}_H \to \mathcal{S}\) as an infinite-dimensional realization of the action-conditioned measurement map $\gamma$. For each QoI \(\psi_j\), we consider a finite-dimensional measurement map that can be obtained through a linear map
\[
L_{\psi_j}:\mathcal{S}\to\mathbb{R}^d,
\qquad
\gamma_{\psi_j}(\mathbf{u})
:=
(L_{\psi_j}\circ S)(\mathbf{u}).
\]
The range of \(L_{\psi_j}\) therefore specifies a QoI-specific linear subspace of the representation space, while the signature parameterizes the measurement direction within that subspace as a function of the action trajectory. Our main result below establishes the existence of a shared VLA representation $\lambda$ where
these QoI-specific measurement structures can coexist.

\begin{theorem}[Existence of Representation and Measurement Maps]
\label{thm:existence}
Consider the control-affine system in \cref{eq:controlled_system_state}, 
a finite collection of QoIs $\Psi=\{\psi_j\}_{j=1}^p$, and a compact set \(K\subset\mathcal{X}\). Suppose the uniform Chen--Fliess conditions (\cref{app:chen-fliess}) hold on the horizon-\(H\) reachable set from \(K\). Then for every \(\varepsilon>0\), there exist \(d\in\mathbb{N}\), a shared representation map $\lambda:K\to\mathbb{R}^d$, and linear maps $L_{\psi_j}:\mathcal{S}\to\mathbb{R}^d,
\, j=1,\ldots,p$, such that, with $\gamma_{\psi_j} := L_{\psi_j}\circ S : \mathcal{U}_H\to\mathbb{R}^d$,
we have
\begin{equation*}
\max_{1\le j\le p}
\sup_{(x,\mathbf{u})\in K\times\mathcal{U}_H}
\left|
\psi_j(F_{\mathbf{u}}(x))
-
\left\langle
\lambda(x),
\gamma_{\psi_j}(\mathbf{u})
\right\rangle_{\mathbb{R}^d}
\right|
<
\varepsilon.
\end{equation*}
\end{theorem}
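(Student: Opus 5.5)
The plan is to use the Chen--Fliess functional expansion. For a control-affine system with analytic vector fields and an analytic output, the propagated output admits the expansion
\[
\psi_j(F_{\mathbf u}(x))=\sum_{I\in\mathcal A^\star}\bigl(L_{f_{i_1}}\cdots L_{f_{i_k}}\psi_j\bigr)(x)\,S(\mathbf u)^I_{0,H}.
\]
Here $L_{f_i}$ denotes the Lie derivative along $f_i$ (with $f_0$ the drift, matching $u^0\equiv 1$), and the order of composition follows the convention in \cref{app:chen-fliess}. I expect the uniform Chen--Fliess conditions (\cref{app:chen-fliess}) to provide two things on the horizon-$H$ reachable set from $K$:
\begin{itemize}
\item a uniform growth bound $\sup_{x\in K}|L_{f_{i_1}}\cdots L_{f_{i_k}}\psi_j(x)|\le C M^k k!$;
\item absolute and uniform convergence of the series for $(x,\mathbf u)\in K\times\mathcal U_H$.
\end{itemize}
This expansion is the only analytic input. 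The rest is a truncation argument followed by a block-diagonal stacking.

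\textbf{Truncation.} Admissible controls take values in the bounded set $\mathcal U$, say $|u^i(t)|\le R$. Then $|S(\mathbf u)^I_{0,H}|\le R_0^k H^k/k!$ for a word of length $k$, where $R_0=\max(1,R)$. Combined with the coefficient bound, the words of length $k$ contribute at most $C\,(m+1)^k (MR_0H)^k$. The uniform Chen--Fliess condition should give exactly the smallness of $H$ needed to make this geometric series converge (i.e.\ $(m+1)MR_0H<1$). In that case, for each $j$ there is a truncation level $N_j$ with
\[
\sup_{K\times\mathcal U_H}\Bigl|\psi_j(F_{\mathbf u}(x))-\sum_{|I|\le N_j} c^j_I(x)\,S(\mathbf u)^I\Bigr|<\varepsilon,
\qquad c^j_I:=L_{f_{i_1}}\cdots L_{f_{i_k}}\psi_j .
\]
Set $N=\max_j N_j$. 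Controlling this tail uniformly in both $x$ and $\mathbf u$ is the main obstacle. If the appendix's condition is phrased differently, for instance as a convergence radius of the generating series, I would translate it into this uniform tail bound. I would also use compactness of $K$ to get $C,M$ uniform over $K$, since the coefficients are continuous.

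\textbf{Construction of $\lambda$ and $L_{\psi_j}$.} Let $W_N=\{I\in\mathcal A^\star:|I|\le N\}$, which is finite. There are two natural choices.
\begin{itemize}
\item \emph{Stacked blocks.} Take $d=p\,|W_N|$ and let $\lambda(x)$ stack the blocks $(c^j_I(x))_{I\in W_N}$ for $j=1,\dots,p$. Define $L_{\psi_j}:\mathcal S\to\mathbb R^d$ as the linear map that places the coordinates $(S^I)_{I\in W_N}$ in the $j$-th block and zeros elsewhere. Then $\langle\lambda(x),\gamma_{\psi_j}(\mathbf u)\rangle=\sum_{I\in W_N}c^j_I(x)S(\mathbf u)^I$.
\item \emph{Shared coordinates.} To make the representation genuinely shared, take $\lambda(x)$ to be a basis expansion of the span of all coefficient functions $\{c^j_I\}_{j,I}$, and let each $L_{\psi_j}$ carry the corresponding linear combination.
\end{itemize}
Either choice is linear in $S$ and only reads finitely many coordinates, so it is well defined on $\mathcal S$. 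The truncation estimate then gives the bound for every $j$ simultaneously. Taking the maximum over the finitely many $j$ finishes the proof.
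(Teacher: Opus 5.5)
Your proposal is correct and follows the paper's proof essentially step for step. The paper bounds the words of length $k$ by $C_K q_K^k$ with $q_K=(m+1)M_K\bar u H<1$, using $|S(\mathbf u)^I_{0,H}|\le(\bar uH)^{|I|}/|I|!$; it picks a single $N$ with $C_Kq_K^{N+1}/(1-q_K)<\varepsilon$, and it uses exactly your stacked-block construction $\lambda_N=(c_{1,N},\dots,c_{p,N})$, $L_{\psi_j}=E_j\circ P_N$, $d=p\,|W_N|$. The only correction is that the uniform-in-$I$ factorial growth constants come directly from the paper's Lie-derivative growth assumption on the reachable set $K_H$ (which is also what makes the series equal $\psi_j(F_{\mathbf u}(x))$), and they could not be obtained from continuity and compactness of $K$ alone.
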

The result is established constructively via the classical Chen--Fliess expansion \citep{chen1977iterated,fliess1981fonctionnelles}, as detailed in \Cref{pf:existence}. 
Importantly, we do not assume that the learned VLA representation \(\lambda(x)=\bar{\lambda}(h(x),\ell)\) coincides exactly with the Lie-derivative-based construction. Rather, the latter establishes the existence of physically meaningful structure required by our framework. 
\Cref{thm:existence} provides a theoretical framework consistent with empirical linear probing results observed in existing VLA representations~\citep{lu2025probing,buurmeijer2026observingcontrollingfeaturesvisionlanguageaction}, while also providing an oracle against which arbitrary learned representations can be quantitatively assessed


\Cref{thm:existence} answers
\Cref{q:shared-representation,q:measurement-map} affirmatively, in an existential sense. A desirable VLA model can admit a shared representation \(\lambda\) that supports multiple QoIs, with each QoI equipped with an action-conditioned linear measurement of the form $\gamma_{\psi_j}=L_{\psi_j}\circ S$. Thus, the action dependence is captured by the signature, with \(L_{\psi_j}\) specifying the QoI-specific measurement structure.

\section{Geometry of the Representation Space}
\label{sec:steering}
\Cref{sec:probing} addressed the representation-side question of whether propagated QoIs can admit action-conditioned linear probes.
Linear probing alone, however, does not determine how an intervention on the representation changes the downstream action distribution. 
Thus, we next introduce a geometry on the representation space for formulating \emph{linear steering}. To this end, we view stochastic action-chunk policies, such as autoregressive and diffusion policies, as defining a family of conditional distributions over action trajectories $\mathbf u \in \mathcal{U}_H$
\[
\mathcal{P}:=\left\{\pi(\cdot\mid \lambda):\mathcal{U}_H\to\mathbb{R}_{\geq 0}\right\},
\]
and regard this family as parameterized by \(\lambda\). To emphasize that the policy is parameterized by the representation \(\lambda\), we henceforth write the policy as \(\pi_\lambda\). Motivated by the geometric structure induced by the softmax-based distribution family in~\citep{park2026information}, we would like to introduce a corresponding notion of geometry in representation space through $\mathcal{P}$.

For the policy-side analysis, we adopt the following identification. In general, the distribution family induced by a modern generative policy may possess a highly complex parametric structure. To obtain a tractable yet expressive representation of path-dependent statistics, we lift action trajectories $\mathbf{u}\mapsto S(\mathbf{u})$ into the signature space $\mathcal{S}$. This construction leverages the universality of linear functionals on signatures for approximating continuous path functionals~\citep{lyons2014rough}.

We then place this construction in a Hilbert-space setting by replacing the finite-dimensional representation space $\mathbb{R}^d$ with the Hilbert signature space $\mathcal{H}_S$, obtained by completing $\mathcal{S}$ under a suitable inner product $\langle\cdot,\cdot\rangle_{\mathcal{H}_S}$. This formulation naturally bridges the representation and policy modules. Specifically, we regard $\lambda(x){\,\in\,}\Theta{\,\subseteq\,}\mathcal{H}_S$ simultaneously as the VLA representation and as the natural parameter of the signature exponential family. This identification allows us to draw on the well-developed body of work on linear representations in LLMs, including their geometry, and steering~\citep{park2024linear,marks2024geometry,park2026information}. 

\begin{definition}[Signature Exponential Family] Given a base measure \(\mu\) on the action-chunk space \(\mathcal{U}_H\), we define the log-partition function $A(\lambda):= \log \int_{\mathcal{U}_H} \exp(\left<\lambda, S(\mathbf{u}) \right>_{\mathcal H_S}) \mathrm{d}\mu(\mathbf{u})$, whenever the integral is finite. The \emph{signature exponential family} generated by $\mu$ is
\begin{equation*}
\mathcal{P}_{\mu} := \left\{\frac{\mathrm{d}\pi_\lambda}{\mathrm{d}\mu}(\mathbf{u}) = \exp(\left<\lambda, S(\mathbf{u})\right>_{\mathcal H_S} - A(\lambda)) : \lambda \in \Theta \right\}
\end{equation*}
where $\Theta := \{\lambda\in \mathcal H_S : A(\lambda) < \infty\}$ is the natural parameter space.
\end{definition}
The resulting exponential-family structure further admits a natural GLM perspective~\citep{nelder1972generalized,mccullagh1989generalized}, in which the conditional distribution of $\mathbf{u} \,|\, x$ belongs to an exponential family with natural parameter determined by $x$.
\begin{definition}[Signature Generalized Linear Model]
Suppose that \(\mathbf{u}\,|\, x\) follows a signature exponential family with base measure $\mu$ on $\mathcal{U}_H$. A
\emph{signature generalized linear model}  specifies a state-conditioned natural parameter map $\lambda:\mathcal{X}\to\Theta$, which induces the conditional distribution
\begin{equation*}
\frac{\mathrm{d}\pi_{\lambda}}{\mathrm{d}\mu}(\mathbf{u}\mid x)
=
\exp\!\left(
\left\langle \lambda(x),S(\mathbf{u})\right\rangle_{\mathcal{H}_S}
-
A(\lambda(x))
\right).
\end{equation*}
\end{definition}
We introduce the signature GLM as an architecture-agnostic statistical abstraction linking representation changes to downstream action distributions, rather than as a characterization of specific VLA architectures. By abstracting away the internal generative mechanism, it provides a mathematically interpretable geometry for analyzing representation--policy interactions. Specifically, this exponential family structure endows the statistical manifold with a \emph{dually flat geometry}~\citep{amari2000methods}, with dual coordinate
$\varphi(\lambda):=\nabla A(\lambda)=\mathbb{E}_{\mathbf{u}\sim\pi_\lambda}[S(\mathbf{u})]$
and dual space $\Phi:=\varphi(\Theta)$. For a fixed state $x\in\mathcal{X}$, define the
state-conditioned steering direction $\lambda_\psi(x):=L_\psi^*\lambda(x)\in\mathcal{H}_S$ as in \Cref{def:linear_steering},
so that the expected propagated QoI is linear in the dual coordinate:
\begin{equation*}
\mathbb{E}_{\mathbf{u}\sim\pi_\lambda}
\bigl[\psi(F_{\mathbf{u}}(x))\bigr]
=
\left\langle
\lambda(x),
L_\psi\,\mathbb{E}_{\mathbf{u}\sim\pi_\lambda}
[S(\mathbf{u})]
\right\rangle_{\mathcal{H}_S}
=
\left\langle
L_\psi^*\lambda(x),
\varphi(\lambda)
\right\rangle_{\mathcal{H}_S}
=
\left\langle
\lambda_\psi(x),
\varphi(\lambda)
\right\rangle_{\mathcal{H}_S}.
\end{equation*}
A desired QoI level $c\in\mathbb{R}$ therefore corresponds to the dual affine
hyperplane $\Phi_{\psi,x}(c):=\{\varphi\in\Phi:\langle\lambda_\psi(x),\varphi\rangle_{\mathcal{H}_S}=c\}$,
and we formulate steering as the reverse-KL projection onto it,
\begin{equation*}
\widehat{\lambda}\in
\operatorname*{arg\,min}_{\varphi(\lambda)\in\Phi_{\psi,x}(c)}
D_{\mathrm{KL}}\bigl(\pi_\lambda\,\|\,\pi_{\lambda(x)}\bigr).
\end{equation*}
By dual Bregman geometry the solution is $\widehat{\lambda}=\lambda(x)+\alpha\lambda_\psi(x)$
for some $\alpha\in\mathbb{R}$: projecting onto a constant expected-QoI hyperplane
induces the linear path
\begin{equation}
\label{eq:linear-path}
\lambda_\alpha:=\lambda(x)+\alpha\,\lambda_\psi(x)
\end{equation}
in the natural parameter space. For readers seeking further details, we provide a complete derivation in \Cref{app:kl-projection}. We now show that this path satisfies the desired notion of linear steering.

\begin{theorem}[Reverse-KL Linear Steering for VLA]
\label{thm:linear_steering}
Fix $x{\,\in\,}\mathcal{X}$, and let $\lambda_\alpha$ denote the steering path in \cref{eq:linear-path}. Define $J_{\psi,x}(\alpha){\,:=\,}\mathbb{E}_{\mathbf{u}\sim\pi_{\lambda_\alpha}} \bigl[\psi(F_{\mathbf{u}}(x))\bigr]$.
Suppose QoI $\psi$ admits an exact linear probe at $x$ of the form $\psi\bigl(F_{\mathbf{u}}(x)\bigr) = \bigl\langle
\lambda(x),\,L_\psi S(\mathbf{u})
\bigr\rangle_{\mathcal{H}_S}, \, \forall\mathbf{u}\in\mathcal{U}_H$, for a bounded linear operator $L_\psi{\,:\,}\mathcal{S}\to\mathcal{H}_S$.
Then, for every $\alpha$ such that $\lambda_\alpha\in\Theta$, $J_{\psi,x}(\alpha)$ is nondecreasing in $\alpha$.
\end{theorem}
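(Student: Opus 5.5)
The plan is to reduce monotonicity of $J_{\psi,x}$ to the convexity of the log-partition function $A$ along a line. Using the exact linear probe and linearity of expectation (after justifying the interchange of the bounded operator $L_\psi$ with the Bochner integral), I will first write
\[
J_{\psi,x}(\alpha)=\bigl\langle \lambda(x),\,L_\psi\,\mathbb{E}_{\mathbf{u}\sim\pi_{\lambda_\alpha}}[S(\mathbf{u})]\bigr\rangle_{\mathcal{H}_S}
=\bigl\langle \lambda_\psi(x),\,\varphi(\lambda_\alpha)\bigr\rangle_{\mathcal{H}_S},
\qquad \lambda_\psi(x)=L_\psi^*\lambda(x),
\]
exactly as in the display preceding \cref{eq:linear-path}. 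Write $v:=\lambda_\psi(x)$ and $g(\alpha):=A(\lambda(x)+\alpha v)$. The set $\{\alpha:\lambda_\alpha\in\Theta\}$ is an interval, since $\Theta$ is convex by H\"older's inequality applied to the integrand $e^{\langle\lambda,S\rangle}$. On this interval $g$ is a one-dimensional log-partition function with sufficient statistic $T(\mathbf{u}):=\langle v,S(\mathbf{u})\rangle_{\mathcal{H}_S}$.

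Next I will differentiate under the integral sign at interior points of the interval. This gives $g'(\alpha)=\mathbb{E}_{\pi_{\lambda_\alpha}}[T]=\langle v,\varphi(\lambda_\alpha)\rangle=J_{\psi,x}(\alpha)$ and $g''(\alpha)=\mathrm{Var}_{\pi_{\lambda_\alpha}}(T)\ge 0$. Hence $J_{\psi,x}=g'$ and $J_{\psi,x}'=g''\ge 0$, so $J_{\psi,x}$ is nondecreasing. There is also a derivative-free route that I will give as the primary argument, since it avoids regularity issues. Because $g$ is convex (H\"older again), its derivative wherever it exists is nondecreasing. Alternatively, for $\alpha<\beta$ one can use the covariance identity $J(\beta)-J(\alpha)\ge 0$, which follows from the nonnegativity of $D_{\mathrm{KL}}(\pi_{\lambda_\alpha}\|\pi_{\lambda_\beta})+D_{\mathrm{KL}}(\pi_{\lambda_\beta}\|\pi_{\lambda_\alpha})=(\beta-\alpha)\bigl(J(\beta)-J(\alpha)\bigr)$.

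That last symmetrized-KL identity is the cleanest argument, and I will make it the main proof. Its only requirements are that both distributions lie in the family and that $T$ is integrable under each, which is what makes $J$ finite. I compute $\log\frac{d\pi_{\lambda_\beta}}{d\pi_{\lambda_\alpha}}=(\beta-\alpha)T-(g(\beta)-g(\alpha))$. Taking expectations under $\pi_{\lambda_\beta}$ and under $\pi_{\lambda_\alpha}$ and adding, the log-partition terms cancel, leaving $(\beta-\alpha)(\mathbb{E}_\beta T-\mathbb{E}_\alpha T)\ge 0$.

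I expect the main obstacle to be the infinite-dimensional technicalities. These include showing that $\mathbb{E}[S(\mathbf{u})]$ exists in $\mathcal{H}_S$ and that the bounded operator $L_\psi$ (extended to $\mathcal{H}_S$) commutes with expectation, so that $J_{\psi,x}(\alpha)=\mathbb{E}[T]$. It also requires integrability of $T$ at boundary points of $\Theta$. I will handle these by interpreting $J_{\psi,x}(\alpha)$ directly as $\mathbb{E}_{\pi_{\lambda_\alpha}}[T]$, which holds pointwise in $\mathbf{u}$ via the probe identity and the adjoint. The KL argument then only needs $T\in L^1$ under the relevant measures, and I will assume or verify this whenever $J$ is finite.
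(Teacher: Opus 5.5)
Your proof is correct, but your main argument differs from the paper's. The paper uses what you list as a secondary route. It writes $J_{\psi,x}(\alpha)=\langle\lambda_\psi(x),\varphi(\lambda_\alpha)\rangle_{\mathcal{H}_S}$ and differentiates along the affine path using $D\varphi=\nabla^2 A$. It then identifies the Hessian quadratic form with the covariance of the sufficient statistic, which gives $\frac{\mathrm{d}}{\mathrm{d}\alpha}J_{\psi,x}(\alpha)=\mathrm{Var}_{\pi_{\lambda_\alpha}}\bigl[\psi(F_{\mathbf{u}}(x))\bigr]\ge 0$. Finally it integrates over $[\alpha_1,\alpha_2]$, under the explicit hypothesis that the whole segment stays in $\Theta$.

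Your symmetrized-KL identity
\[
(\beta-\alpha)\bigl(J(\beta)-J(\alpha)\bigr)=D_{\mathrm{KL}}(\pi_{\lambda_\alpha}\|\pi_{\lambda_\beta})+D_{\mathrm{KL}}(\pi_{\lambda_\beta}\|\pi_{\lambda_\alpha})
\]
is equivalent to monotonicity of $g'$ for the convex function $g$, and it is derivative-free. It needs only $T\in L^1$ under the two endpoint distributions, because you read $J$ pointwise as $\mathbb{E}[T]$ through the adjoint. It therefore avoids three things the paper's argument silently relies on: Fr\'echet differentiability of $\varphi$, the covariance-operator identification of $\nabla^2 A$ on an infinite-dimensional $\mathcal{H}_S$, and the interchange $\mathbb{E}\langle\lambda_\psi,S\rangle=\langle\lambda_\psi,\mathbb{E}S\rangle$. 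It also remains valid at boundary points of $\Theta$. Your H\"older observation that $\{\alpha:\lambda_\alpha\in\Theta\}$ is an interval makes the paper's segment hypothesis automatic given the endpoints.

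What the paper's route buys is the quantitative slope formula $J'=\mathrm{Var}$. This is exactly what the numerical section checks (slope matching the variance to $10^{-11}$), and it explains why $J$ flattens as the tilted policy concentrates on extremal chunks. Your identity yields strict monotonicity whenever $\pi_{\lambda_\alpha}\neq\pi_{\lambda_\beta}$, but not the local rate. Keeping the variance computation as a complement at interior points, as you propose, is worthwhile.
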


See~\Cref{pf:linear_steering} for the proof. Although we assume an exact linear probe, \Cref{thm:existence} shows that this regime can be approximated arbitrarily well with a sufficiently expressive representation as \(d\) becomes large. We therefore interpret the exact-probe assumption as an idealized limit in which the propagated QoI can be represented without approximation error.
In this limit, the exact linear-steering guarantee is
recovered. Establishing quantitative steering guarantees under finite probing error is a separate robustness question that we leave for future work.

\section{Numerical Verification}
\label{sec:oracle-experiment}


We begin with a controlled numerical verification of the probing and steering theory developed in~\cref{sec:probing,sec:steering}.
To isolate the predicted steering mechanism from representation-learning error, we instantiate the Chen--Fliess construction of~\Cref{thm:existence} as an \emph{oracle representation} for the controlled system. This yields an exact action-conditioned linear probe together with the corresponding signature exponential family policy. 
Due to space limitations, several intermediate steps are omitted here. Please see~\Cref{app:oracle-derivation} for the full derivation. 

\paragraph{Setup.}
Given the language instruction $\ell = $ ``\textit{Navigate 
to the goal.}'', consider a minimal planar control-affine system with state
$x = (p_x, p_y, \theta) \in \mathbb{R}^3$ and input $\mathbf{u} = (1, v, \omega)\in \mathcal{U}_H$:
\[
    \dot p_x = v,\qquad \dot p_y = \theta v,\qquad \dot\theta = \omega,
\]
over horizon $H=1$ from $x_0=(0,0,0)$, with nominal goal $p_{\mathrm{goal}}=(1,0)$.
We denote by $\mathbf{u}$ an entire action chunk over the horizon $[0,H]$. Each chunk is piecewise constant over $K=8$ equal-length intervals of duration $\Delta t = H/K$ and is parameterized by a forward speed $s$ and a turning rate $r$.
\[
    v(t)=s,
    \qquad
    \omega(t)=
    \begin{cases}
        r, & t < H/2,\\
        -r, & t \geq H/2,
    \end{cases}
\]
so that each chunk turns and then counter-turns, producing lateral motion while approximately
restoring the terminal heading. Sweeping $s$ over $11$ uniform points in $[0.75,1.25]$ and $r$ over $61$ uniform points in $[-2.2,2.2]$ yields $671$ candidates.

For visual reference, we place a circular obstacle centered at $p_{\mathrm{obs}} = (0.5, 0)$ with radius $r_{\mathrm{obs}} = 0.08$.
The obstacle is not included in $C_{\mathrm{task}}$; it serves to visualize the physical effect of lateral steering.

\paragraph{Quantity of interest and exact oracle probe.}
We take $\psi(x)=p_y$, so  $\psi(F_\mathbf{u}(x))=p_y(H)$. Only three signature coordinates are required because they are the only ones with nonzero Chen–Fliess coefficients for $\psi(x)=p_y$: $\{\emptyset, 1, 21\}$. All other signature terms have zero coefficients. 
Thus, the Chen--Fliess expansion terminates exactly, hence the term \emph{oracle representation}:
\[
    \psi(F_\mathbf{u}(x))
    = \langle \lambda_\psi(x),S(\mathbf{u})\rangle_{\mathcal H_S} = p_y + \theta S(\mathbf{u})^1_{0,H} + S(\mathbf{u})^{21}_{0,H}
    ,
    \qquad
    \lambda_\psi(x)=\bigl[\,p_y,\;\theta,\;1\,\bigr]^\top .
\]
To obtain a representation $\lambda$ that is not aligned with the QoI, we represent it by an invertible linear change of coordinates of the form $\lambda(x)=M\lambda_\psi(x)$ and set $L_\psi=M^{-\top}$, $\gamma_\psi(\mathbf{u})=L_\psi S(\mathbf{u})$, with
\[
    M=
    \begin{bmatrix}
        0.10 & -0.04 & 0.18\\
        0.14 &  0.03 & 0\\
        0.02 &  0.16 & 0
    \end{bmatrix},
    \qquad
    \lambda(x_0)=M\lambda_\psi(x_0)=(0.18,0,0)^\top.
\]
Since \(M^\top M^{-\top}=I\), the bilinear relation $\psi(F_\mathbf{u}(x)) =
\langle \lambda(x),\gamma_\psi(\mathbf{u})\rangle_{\mathcal H_S}$ holds exactly for every candidate chunk. The oracle VLA representation \(\lambda\) then plays two complementary roles: paired with the probe feature \(\gamma_\psi(\mathbf{u})\), it decodes the propagated QoI; paired with the sufficient statistic \(S(\mathbf{u})\), it serves as the natural parameter of the policy. These two pairings are connected through \(L_\psi\).

\paragraph{Goal-directed VLA policy.} We define a discrete base measure $\mu$ over candidate actions
with action preference encoded by the goal-reaching cost
\[
    C_{\mathrm{task}}(\mathbf{u};x_0) = 8\bigl(p_x(H)-1\bigr)^2 + 2p_y(H)^2 + 4\theta(H)^2 + 0.05\int_0^H\!\bigl(v^2+\omega^2\bigr)\mathrm{d}t.
\]
We set $\mu(\mathbf{u})\propto\exp(-C_{\mathrm{task}}(\mathbf{u};x_0)/\tau)$ with $\tau = 0.20$, and instantiate the signature exponential family policy as $\pi_\lambda(\mathbf{u}\mid x_0)\propto\mu(\mathbf{u})\exp(\langle\lambda,S(\mathbf{u})\rangle_{\mathcal{H}_S})$. 

Since $\lambda_\psi(x_0)=(0,0,1)^\top$, the parameter $\lambda(x_0)$ contributes only the constant $\langle\lambda(x_0),S(\mathbf{u})\rangle_{\mathcal H_S}\equiv0.18$ to the policy log-density. Hence, after normalization, the unsteered policy $\pi_{\lambda(x_0)}$ coincides with the base measure $\mu$.

\begin{figure}[t]
    \centering
    \includegraphics[width=0.95\linewidth]{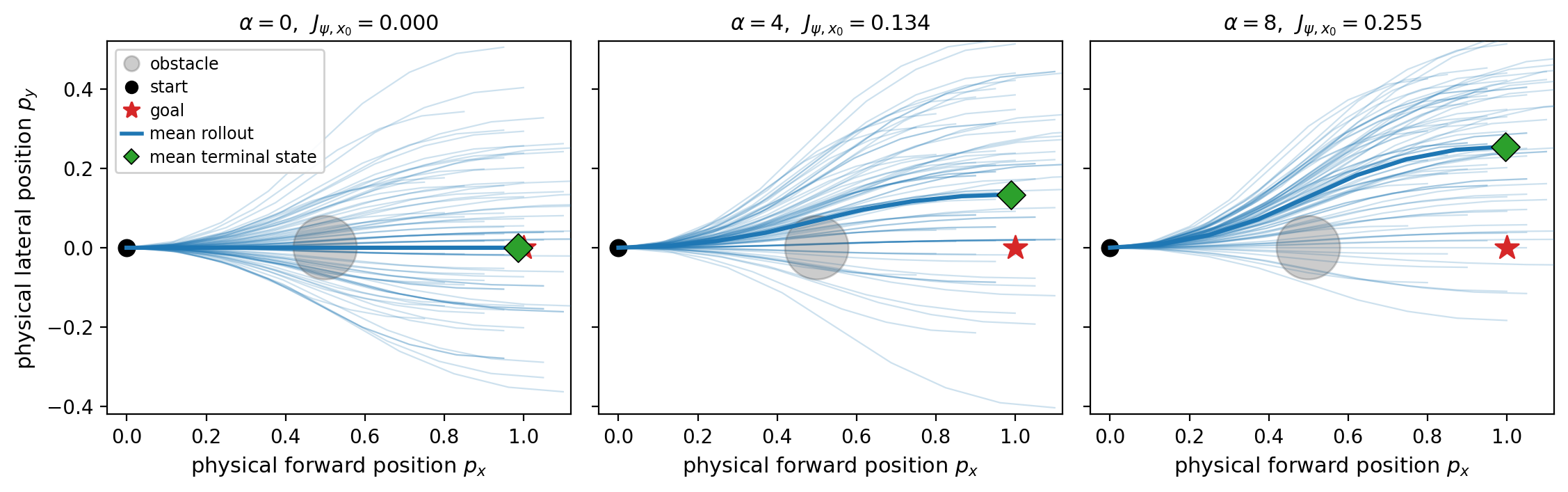}
    \caption{\textbf{Exact oracle steering in the physical workspace.}
    For QoI $\psi(x)=p_y$, increasing $\alpha$ along $\lambda_\alpha=\lambda(x_0)+\alpha\lambda_\psi(x_0)$
    shifts the policy toward trajectories with increased propagated QoI $p_y$. Each panel shows $80$ rollouts sampled i.i.d.\ from $\pi_{\lambda_\alpha}$, the policy-averaged rollout (thick), and the policy-weighted mean terminal state (diamond).}
    \label{fig:oracle_steering}
\end{figure}

\paragraph{Linear steering.}
Following~\Cref{thm:linear_steering}, we define $\lambda_\psi(x)=L_\psi^\top\lambda(x)$ and traverse
the linear path $\lambda_\alpha=\lambda(x_0)+\alpha\lambda_\psi(x_0)$.
Here $L_\psi^\top\lambda(x)=M^{-1}M\lambda_\psi(x)=\lambda_\psi(x)$, so $\lambda_\psi(x_0)=(0,0,1)^\top$ and
\[
    \langle\lambda_\psi(x_0),S(\mathbf{u})\rangle_{\mathcal H_S}
    = S^{21}_{0,H}(\mathbf{u})
    = \psi(F_\mathbf{u}(x_0)),
\]
the last equality because $p_y=\theta=0$ at $x_0$.
Since the log-density is affine in $\lambda$, the shift by $\alpha\lambda_\psi(x_0)$ factors out,
\[
    \pi_{\lambda_\alpha}(\mathbf{u}\mid x_0)
    \propto \mu(\mathbf{u})\,
    e^{\langle\lambda(x_0),S(\mathbf{u})\rangle_{\mathcal H_S}}\,
    e^{\alpha\langle\lambda_\psi(x_0),S(\mathbf{u})\rangle_{\mathcal H_S}}
    \propto \pi_{\lambda(x_0)}(\mathbf{u}\mid x_0)\,
    \exp\bigl(\alpha\,\psi(F_{\mathbf{u}}(x_0))\bigr),
\]
so the steered policy is the nominal one tilted by the propagated QoI itself, and we report $J_{\psi,x_0}(\alpha)$.

\paragraph{Results.}
Steering the representation $\lambda_\alpha$ moves the physical quantity it decodes. As $\alpha$ increases over $\{0,4,8\}$, the sampled rollouts shift laterally and the expected propagated QoI rises monotonically from $0.000$ to $+0.134$ to $+0.255$ (\Cref{fig:oracle_steering}), with the probe exact to $3\times10^{-16}$ on all $671$ candidates. Sweeping $\alpha$ continuously (\Cref{fig:oracle_slope}) shows that this shift is predicted rather than merely observed: the numerical slope matches $\mathrm{Var}_{\pi_{\lambda_\alpha}}[\psi(F_\mathbf{u}(x_0))]$ to $10^{-11}$, so the response to steering is determined by a quantity measurable from the nominal policy alone. 
It is also cheap, costing $0$, $0.27$, and $0.98$ nats of KL at the three levels above. The steerable range is nonetheless bounded.
Since $\psi$ is bounded on the finite candidate set, large $\alpha$ concentrates $\pi_{\lambda_\alpha}$ on the extremal chunks, collapsing the same variance that sets the slope by an order of magnitude; $J_{\psi,x_0}$ is therefore monotone but never affine, approaching $0.688$ rather than growing.

\begin{figure}[t]
    \centering
    \includegraphics[width=1.0\linewidth]{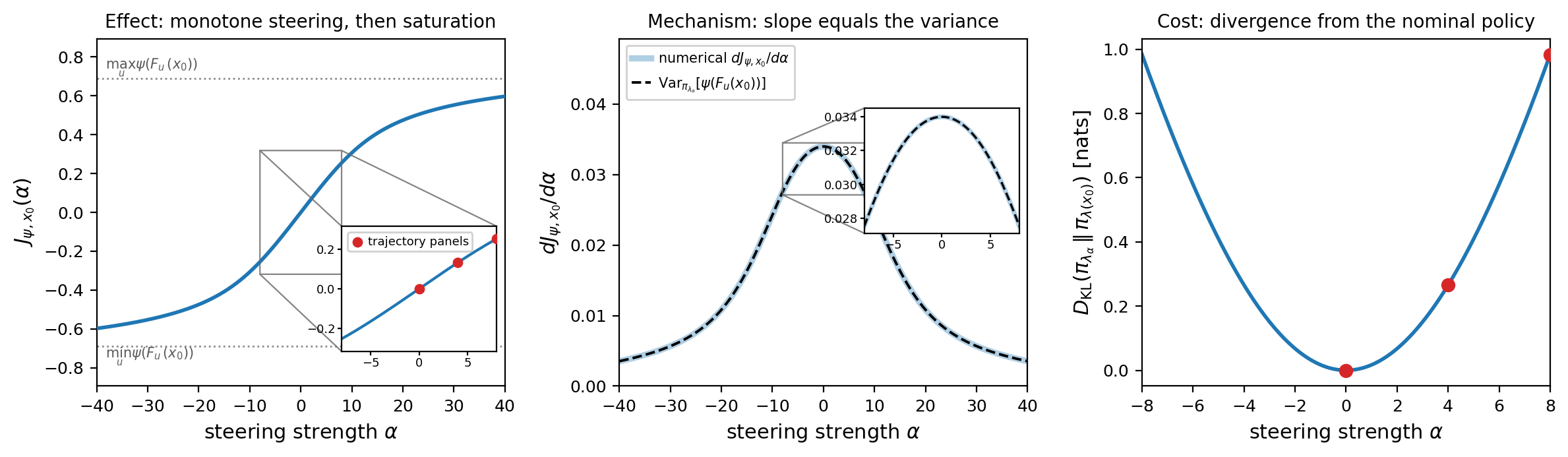}
    \caption{\textbf{Effect, mechanism, and cost of linear steering.}
    \emph{Left:} $J_{\psi,x_0}(\alpha)$; dotted lines mark the extreme values of $\psi(F_\mathbf{u}(x_0))$ over the candidate set.
    \emph{Middle:} numerical $\mathrm{d}J_{\psi,x_0}/\mathrm{d}\alpha$ against
    $\mathrm{Var}_{\pi_{\lambda_\alpha}}[\psi(F_\mathbf{u}(x_0))]$.
    \emph{Right:} $D_{\mathrm{KL}}(\pi_{\lambda_\alpha}\|\pi_{\lambda(x_0)})$.
    Insets and the right panel show $\alpha\in[-8,8]$; red markers are the
    $\alpha\in\{0,4,8\}$ panels of \Cref{fig:oracle_steering}.}
    \label{fig:oracle_slope}
\end{figure}

\section{Conclusion}

We developed a formal mathematical framework for studying vision-language-action (VLA) models through the lens of the linear representation hypothesis. Unlike the standard large language model setting, physically meaningful quantities in a VLA must be understood through their evolution under candidate actions and the underlying controlled dynamics. Motivated by this distinction, we introduced action-conditioned linear probing of propagated quantities of interest (QoIs) and showed that signature-based representations provide a constructive route to shared linear representations for multiple physical QoIs. On the policy side, we introduced a signature generalized linear model and showed that an exact linear probe induces a linear steering direction along which the expected propagated QoI changes monotonically. These predictions were verified in a controlled oracle setting.

\paragraph{Scope and future directions.}
Our analysis focuses on an idealized regime characterized by two structural assumptions. \emph{First}, the propagated QoI admits an exact action-conditioned linear probe. While~\Cref{thm:existence} shows that the probing error can be made arbitrarily small with a sufficiently expressive representation, learned VLA representations need not satisfy this structure exactly. Nevertheless, existing empirical studies report encouraging probing performance on pretrained VLA models, and these results can be viewed as special cases of our broader action-conditioned formulation; we discuss this connection in detail in~\Cref{app:empirical}. Developing refined probes that exploit this structure is left for future work. \emph{Second}, we use the signature exponential family (SEF) as a tractable statistical geometry relating VLA representations to downstream stochastic policies. Modern VLA policies may rely on generative mechanisms such as diffusion or flow matching. Understanding when their induced action distributions are compatible with the SEF geometry, and how such compatibility can be assessed, provides a natural route toward extending the present framework.

Within this scope, our numerical study in~\Cref{sec:oracle-experiment} provides a controlled realization of the proposed probing and steering mechanisms, separating their theoretical structure from the additional complexities of learned representations and modern generative architectures. The framework thereby provides a concrete basis for investigating how such geometry emerges in pretrained VLA models.

\section*{Policy on LLM Use for Research and Writing Papers}
We used generative AI tools to assist with language polishing and to discuss and refine aspects of the theoretical formulation. All mathematical statements and final formulations were independently verified by the authors.

\bibliography{iclr2027_conference}
\bibliographystyle{iclr2027_conference}

\newpage
\appendix

\section{Regularity Conditions for the Chen--Fliess Expansion}
\label{app:chen-fliess}

We collect the notation and sufficient conditions used for the
Chen--Fliess expansion in \Cref{thm:existence}. Since the approximation
result is local in the state variable, we formulate the conditions
uniformly over an arbitrary compact set \(K\subset\mathcal{X}\) and the
admissible action class \(\mathcal{U}_H\). The regularity and convergence
constants below may depend on \(K\), but are uniform over
\(K\times\mathcal{U}_H\).

\paragraph{Controlled system.}
Consider the control-affine system
\begin{equation}
\dot x(t)
=
f_0(x(t))
+
\sum_{i=1}^m f_i(x(t))u^i(t),
\label{eq:appendix-controlled-system}
\end{equation}
and define $u^0(t)\equiv 1$. Then
\[
\dot x(t)
=
\sum_{i=0}^m f_i(x(t))u^i(t).
\]
Let $\mathcal{A}=\{0,1,\ldots,m\}$ denote the input alphabet, and let \(\mathcal{A}^\star\) denote the set
of all finite words over \(\mathcal{A}\), including the empty word
\(\emptyset\).

For a word $I =(i_1,\ldots,i_k)\in\mathcal{A}^\star$, we use the same chronological convention as in \Cref{def:signature}.
Namely,
\begin{equation}
S(\mathbf{u})^I_{0,H}
:=
\int_{0\le t_1\le\cdots\le t_k\le H}
\prod_{r=1}^k u^{i_r}(t_r)
\,\mathrm{d}t_1\cdots \mathrm{d}t_k,
\qquad
S(\mathbf{u})^\emptyset_{0,H}=1.
\label{eq:appendix-signature}
\end{equation}
With this convention, define the corresponding iterated Lie derivative
by
\begin{equation}
\mathcal{L}_I\psi
=
\mathcal{L}_{f_{i_1}}\cdots\mathcal{L}_{f_{i_k}}\psi,
\qquad
\mathcal{L}_\emptyset\psi=\psi,
\label{eq:appendix-lie-word}
\end{equation}
where $\mathcal{L}_{f_i}\phi(x)=D\phi(x)f_i(x)$. The ordering in
\cref{eq:appendix-signature,eq:appendix-lie-word}
is chosen so that the Chen--Fliess expansion takes the form
\[
\psi(F_{\mathbf{u}}(x))
=
\sum_{I\in\mathcal{A}^\star}
\mathcal{L}_I\psi(x)S(\mathbf{u})^I_{0,H}.
\]

\subsection{Sufficient conditions for locally uniform convergence}

Fix an arbitrary compact set $K\subset\mathcal{X}$. We impose the following sufficient conditions.

\begin{assumption}[Common reachable domain]
\label{ass:reachable-domain}
There exist an open set \(D\subset\mathbb{R}^n\) and a compact set
\(K_H\subset D\) such that, for every \(x\in K\) and every
\(\mathbf{u}\in\mathcal{U}_H\), the controlled system admits a unique
solution on \([0,H]\), and
\[
x_{\mathbf{u}}(t;x)\in K_H,
\qquad
\forall t\in[0,H].
\]
Equivalently, the horizon-\(H\) reachable tube from \(K\),
\[
\mathcal{R}_H(K)
:=
\left\{
x_{\mathbf{u}}(t;x):
x\in K,\,
\mathbf{u}\in\mathcal{U}_H,\,
t\in[0,H]
\right\},
\]
satisfies $\mathcal{R}_H(K)\subset K_H\subset D$.
\end{assumption}

\begin{assumption}[Analytic regularity]
\label{ass:analytic-regularity}
The vector fields $f_0,\ldots,f_m:D\to\mathbb{R}^n$ and each QoI $\psi_j:D\to\mathbb{R},
\, j=1,\ldots,p$, are real analytic on \(D\).
\end{assumption}

\begin{assumption}[Uniformly bounded controls]
\label{ass:bounded-controls}
Every admissible control satisfies
\[
u^i\in L^\infty([0,H]),
\qquad
i=1,\ldots,m,
\]
and the admissible control class is uniformly bounded:
\begin{equation}
\bar u
:=
\sup_{\mathbf{u}\in\mathcal{U}_H}
\max\left\{
1,
\|u^1\|_{L^\infty},
\ldots,
\|u^m\|_{L^\infty}
\right\}
<\infty.
\label{eq:uniform-control-bound}
\end{equation}
\end{assumption}

\begin{assumption}[Uniform Lie-derivative growth]
\label{ass:lie-growth}
There exist constants \(C_K,M_K>0\), possibly depending on \(K\) but
independent of \(j\), \(x\), and \(I\), such that
\begin{equation}
\left|
\mathcal{L}_I\psi_j(x)
\right|
\le
C_K M_K^{|I|}|I|!,
\qquad
\forall j\in\{1,\ldots,p\},
\quad
\forall x\in K_H,
\quad
\forall I\in\mathcal{A}^\star.
\label{eq:lie-growth}
\end{equation}
\end{assumption}

\begin{assumption}[Local convergence radius]
\label{ass:cf-smallness}
The horizon and admissible control magnitude satisfy
\begin{equation}
q_K
:=
(m+1)M_K\bar uH
<
1.
\label{eq:cf-smallness}
\end{equation}
\end{assumption}

\begin{lemma}[Uniform bound on signature coordinates]
\label{lem:signature-bound}
Under \cref{ass:bounded-controls}, for every
\(\mathbf{u}\in\mathcal{U}_H\) and every
\(I\in\mathcal{A}^\star\),
\begin{equation}
|S(\mathbf{u})^I_{0,H}|
\le
\frac{(\bar uH)^{|I|}}{|I|!}.
\label{eq:signature-bound}
\end{equation}
\end{lemma}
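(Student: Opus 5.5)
We bound each factor of the iterated integral by $\bar u$ and then compute the volume of the ordered simplex. The proof is a direct estimate on the integral defining $S(\mathbf{u})^I_{0,H}$ in \cref{eq:appendix-signature}.

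For the empty word the claim reads $1\le 1$, so there is nothing to prove. Fix a nonempty word $I=(i_1,\ldots,i_k)$ and move the absolute value inside the integral.

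Each factor $u^{i_r}(t_r)$ satisfies $|u^{i_r}(t_r)|\le\bar u$ for almost every $t_r$. There are two cases:
\begin{itemize}
\item If $i_r=0$, then $u^0\equiv 1\le\bar u$, because the definition \cref{eq:uniform-control-bound} includes $1$ in the maximum.
\item If $i_r\ge 1$, then $|u^{i_r}(t_r)|\le\|u^{i_r}\|_{L^\infty}\le\bar u$ by \cref{ass:bounded-controls}.
\end{itemize}
Hence the integrand is bounded in absolute value by $\bar u^{k}$ almost everywhere on the simplex $\Delta_k(H):=\{0\le t_1\le\cdots\le t_k\le H\}$.

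The remaining ingredient is the volume of the ordered simplex, $\mathrm{vol}(\Delta_k(H))=H^k/k!$. To see this, note that the $k!$ orderings of the coordinates partition the cube $[0,H]^k$ into congruent simplices, up to a null set. Alternatively, induct on $k$ using $\int_0^H t^{k-1}/(k-1)!\,\mathrm{d}t=H^k/k!$.

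Combining the pointwise bound with the volume gives
\[
|S(\mathbf{u})^I_{0,H}|\le \bar u^{k}\,\frac{H^{k}}{k!}=\frac{(\bar uH)^{|I|}}{|I|!}.
\]

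The only point needing care is that $u^0\equiv 1$ is also dominated by $\bar u$. This is exactly why $\bar u$ is defined with $1$ included in the maximum, so the argument has no genuine obstacle beyond this bookkeeping.
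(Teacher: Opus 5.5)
Your proposal is correct and follows the paper's proof: you bound each factor by $\bar u$ (using that $1$ is included in the maximum defining $\bar u$, which covers $u^0\equiv 1$) and then multiply by the ordered-simplex volume $H^k/k!$. Your added remarks on the almost-everywhere bound and on why the simplex has volume $H^k/k!$ make explicit two points the paper leaves implicit.
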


\begin{proof}
Let \(I=(i_1,\ldots,i_k)\), so that \(|I|=k\). By
\cref{eq:uniform-control-bound},
\[
|u^{i_r}(t)|\le\bar u,
\qquad
r=1,\ldots,k,
\]
where the same bound holds for \(i_r=0\) because \(u^0\equiv1\) and
\(\bar u\ge1\). Hence
\[
\begin{aligned}
|S(\mathbf{u})^I_{0,H}|
&\le
\int_{0\le t_1\le\cdots\le t_k\le H}
\prod_{r=1}^k |u^{i_r}(t_r)|
\,\mathrm{d}t_1\cdots \mathrm{d}t_k \\
&\le
\bar u^k
\operatorname{Vol}
\left\{
0\le t_1\le\cdots\le t_k\le H
\right\}.
\end{aligned}
\]
Since the ordered \(k\)-simplex has volume \(H^k/k!\),
\[
|S(\mathbf{u})^I_{0,H}|
\le
\bar u^k\frac{H^k}{k!}
=
\frac{(\bar uH)^k}{k!}.
\]
The empty-word case follows from \(S(\mathbf{u})^\emptyset_{0,H}=1\).
\end{proof}

\begin{proposition}[Locally uniform Chen--Fliess expansion]
\label{prop:uniform-chen-fliess}
Let \(K\subset\mathcal{X}\) be compact. Under
\cref{ass:reachable-domain,ass:analytic-regularity,ass:bounded-controls,ass:lie-growth,ass:cf-smallness},
for every \(j=1,\ldots,p\),
\begin{equation}
\psi_j(F_{\mathbf{u}}(x))
=
\sum_{I\in\mathcal{A}^\star}
\mathcal{L}_I\psi_j(x)S(\mathbf{u})^I_{0,H},
\label{eq:uniform-cf-series}
\end{equation}
and the series converges absolutely and uniformly over
\[
j\in\{1,\ldots,p\},
\qquad
x\in K,
\qquad
\mathbf{u}\in\mathcal{U}_H.
\]
Moreover, if
\[
R_{j,N}(x,\mathbf{u})
:=
\sum_{|I|>N}
\mathcal{L}_I\psi_j(x)S(\mathbf{u})^I_{0,H},
\]
then
\begin{equation}
\max_{1\le j\le p}
\sup_{(x,\mathbf{u})\in K\times\mathcal{U}_H}
|R_{j,N}(x,\mathbf{u})|
\le
\frac{C_Kq_K^{N+1}}{1-q_K},
\qquad
q_K=(m+1)M_K\bar uH<1.
\label{eq:uniform-cf-remainder}
\end{equation}
\end{proposition}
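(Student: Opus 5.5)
The plan has two parts: first establish the identity \cref{eq:uniform-cf-series} pointwise, then obtain the uniform remainder bound \cref{eq:uniform-cf-remainder} by combining \cref{lem:signature-bound} with \cref{ass:lie-growth}. The remainder bound also gives absolute and uniform convergence, so the two parts together yield the full statement.

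\textbf{Step 1: finite Chen--Fliess expansion with an integral remainder.} Fix $j$, $x\in K$ and $\mathbf{u}\in\mathcal{U}_H$, and let $x(t)=x_{\mathbf{u}}(t;x)$. By \cref{ass:reachable-domain}, this trajectory stays in $K_H\subset D$ on $[0,H]$. By \cref{ass:analytic-regularity}, every $\mathcal{L}_I\psi_j$ is analytic on $D$, so for any smooth $\phi$ on $D$,
\[
\frac{\mathrm{d}}{\mathrm{d}t}\phi(x(t))=\sum_{i=0}^m \mathcal{L}_{f_i}\phi(x(t))\,u^i(t)
\]
for almost every $t$, since the solution is absolutely continuous. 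Applying this to $\phi=\psi_j$ gives
\[
\psi_j(x(H))=\psi_j(x)+\sum_{i}\int_0^H \mathcal{L}_{f_i}\psi_j(x(s))u^i(s)\,\mathrm{d}s .
\]
I then iterate by substituting the same identity into the integrand $\mathcal{L}_{f_i}\psi_j(x(s))$, with the evaluation point $x$ frozen at time $0$. Each substitution prepends a letter and adds one integration variable at an earlier time. After $N$ iterations this gives
\[
\psi_j(F_{\mathbf{u}}(x))=\sum_{|I|\le N}\mathcal{L}_I\psi_j(x)S(\mathbf{u})^I_{0,H}+\widetilde R_{j,N},
\]
where $\widetilde R_{j,N}$ is a sum over words of length $N+1$ of iterated integrals whose innermost integrand is $\mathcal{L}_I\psi_j(x(t_1))$, evaluated along the trajectory.

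The main point to check here is the index convention. The innermost, earliest time must carry the outermost Lie derivative. This matches the paper's choice, with $\mathcal{L}_I=\mathcal{L}_{f_{i_1}}\cdots\mathcal{L}_{f_{i_k}}$ paired with $t_1\le\cdots\le t_k$, exactly as the appendix asserts.

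\textbf{Step 2: bounding the remainder uniformly.} Because $x(t_1)\in K_H$, \cref{ass:lie-growth} applies to the frozen-trajectory integrand. Bounding the integrand by $C_KM_K^{N+1}(N+1)!$ and the remaining iterated integral as in \cref{lem:signature-bound} gives
\[
|\widetilde R_{j,N}|\le (m+1)^{N+1}C_KM_K^{N+1}(N+1)!\,\frac{(\bar uH)^{N+1}}{(N+1)!}=C_Kq_K^{N+1}.
\]
This tends to $0$ by \cref{ass:cf-smallness}, uniformly in $(j,x,\mathbf{u})$. Hence the full series equals $\psi_j(F_{\mathbf{u}}(x))$.

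For absolute convergence, apply the same two bounds termwise, now to the series itself:
\[
\sum_{|I|=k}|\mathcal{L}_I\psi_j(x)S(\mathbf{u})^I_{0,H}|\le (m+1)^kC_KM_K^kk!\,\frac{(\bar uH)^k}{k!}=C_Kq_K^k.
\]
Since $q_K<1$, the Weierstrass M-test gives absolute and uniform convergence. Summing these level bounds over $k\ge N+1$ bounds the tail $R_{j,N}$ by $C_Kq_K^{N+1}/(1-q_K)$, which is \cref{eq:uniform-cf-remainder}. Taking the maximum over the finitely many $j$ does not change the bound, since the constants do not depend on $j$.

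\textbf{Main obstacle.} I expect Step 1 to be the delicate part: justifying the chain rule for $L^\infty$ inputs (Carathéodory solutions, so the identity holds only almost everywhere in time), and getting the word ordering exactly right. An alternative route would be to identify the limit directly, since uniform convergence lets limit and integral be interchanged. I would stick with the integral remainder instead, because the bound $C_Kq_K^{N+1}\to0$ settles the identity directly.
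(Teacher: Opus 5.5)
Your proposal is correct and follows essentially the same route as the paper. Both arguments iterate the fundamental theorem of calculus along the trajectory to get a finite Chen--Fliess expansion with an order-$(N+1)$ integral remainder, bound that remainder by $C_Kq_K^{N+1}$ using the reachable-set, Lie-growth and simplex-volume bounds, and then apply the Weierstrass M-test and sum the geometric tail. Your added care about the a.e. chain rule for $L^\infty$ inputs and about the word-ordering convention fills in details the paper leaves implicit.
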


\begin{proof}
Fix \(j\in\{1,\ldots,p\}\), \(x\in K\), and
\(\mathbf{u}\in\mathcal{U}_H\). For any sufficiently smooth scalar
function \(g:D\to\mathbb{R}\), the fundamental theorem of calculus along
the controlled trajectory gives
\[
\begin{aligned}
g(x_{\mathbf{u}}(t;x))-g(x)
&=
\int_0^t
Dg(x_{\mathbf{u}}(s;x))
\dot x_{\mathbf{u}}(s;x)
\,\mathrm{d}s \\
&=
\sum_{i=0}^m
\int_0^t
\mathcal{L}_{f_i}g
\bigl(x_{\mathbf{u}}(s;x)\bigr)
u^i(s)\,\mathrm{d}s.
\end{aligned}
\]
Repeated application of this identity yields, for every \(N\ge0\),
\begin{equation}
\psi_j(F_{\mathbf{u}}(x))
=
\sum_{|I|\le N}
\mathcal{L}_I\psi_j(x)S(\mathbf{u})^I_{0,H}
+
\widetilde R_{j,N}(x,\mathbf{u}),
\label{eq:finite-cf}
\end{equation}
where \(\widetilde R_{j,N}\) is a sum of iterated integrals of order
\(N+1\). Each such term contains an iterated Lie derivative of order
\(N+1\), evaluated at an intermediate point of the controlled
trajectory.

By \cref{ass:reachable-domain}, every such intermediate point belongs
to \(K_H\). Hence \cref{ass:lie-growth} gives
\[
\left|
\mathcal{L}_I\psi_j
\bigl(x_{\mathbf{u}}(t;x)\bigr)
\right|
\le
C_KM_K^{N+1}(N+1)!
\]
for every word \(w\) of length \(N+1\). Using the uniform control bound
and the volume of the ordered simplex, each remainder term is bounded
by
\[
C_KM_K^{N+1}(N+1)!
\frac{(\bar uH)^{N+1}}{(N+1)!}
=
C_K(M_K\bar uH)^{N+1}.
\]
Since there are \((m+1)^{N+1}\) words of length \(N+1\),
\[
|\widetilde R_{j,N}(x,\mathbf{u})|
\le
C_Kq_K^{N+1}.
\]
Since \(q_K<1\), this converges to zero uniformly over
\[
j\in\{1,\ldots,p\},
\qquad
x\in K,
\qquad
\mathbf{u}\in\mathcal{U}_H.
\]
Letting \(N\to\infty\) in \cref{eq:finite-cf} therefore yields
\cref{eq:uniform-cf-series} uniformly on
\(K\times\mathcal{U}_H\).

For absolute convergence, let \(|I|=k\). By
\Cref{ass:lie-growth,lem:signature-bound},
\[
\begin{aligned}
\left|
\mathcal{L}_I\psi_j(x)S(\mathbf{u})^I_{0,H}
\right|
&\le
C_KM_K^k k!
\frac{(\bar uH)^k}{k!} \\
&=
C_K(M_K\bar uH)^k.
\end{aligned}
\]
There are \((m+1)^k\) words of length \(k\). Therefore
\[
\sum_{|I|=k}
\left|
\mathcal{L}_I\psi_j(x)S(\mathbf{u})^I_{0,H}
\right|
\le
C_Kq_K^k.
\]
Since \(q_K<1\),
\[
\sum_{k=0}^\infty C_Kq_K^k
=
\frac{C_K}{1-q_K}
<
\infty.
\]
The bound is independent of \(j\), \(x\in K\), and
\(\mathbf{u}\in\mathcal{U}_H\). Hence the Weierstrass \(M\)-test gives
absolute and uniform convergence.

Finally,
\[
\begin{aligned}
|R_{j,N}(x,\mathbf{u})|
&\le
\sum_{k=N+1}^\infty
\sum_{|I|=k}
\left|
\mathcal{L}_I\psi_j(x)S(\mathbf{u})^I_{0,H}
\right| \\
&\le
C_K\sum_{k=N+1}^\infty q_K^k \\
&=
\frac{C_Kq_K^{N+1}}{1-q_K}.
\end{aligned}
\]
Taking the maximum over \(j\) and the supremum over
\(K\times\mathcal{U}_H\) proves \cref{eq:uniform-cf-remainder}.
\end{proof}

\newpage

\section{Proofs}
\subsection{Proof of~\Cref{thm:existence}}
\label[appendixsubsection]{pf:existence}



\begin{proof}
Fix an arbitrary compact set \(K\subset\mathcal{X}\) and
\(\varepsilon>0\). Let $\mathcal{A}=\{0,1,\ldots,m\}$, where \(u^0(t)\equiv1\), and let \(\mathcal{A}^\star\) denote the set of
all finite words over \(\mathcal{A}\), including the empty word
\(\emptyset\).

By the locally uniform Chen--Fliess expansion established in
\cref{app:chen-fliess}, for every \(j\in\{1,\ldots,p\}\),
\begin{equation}
\psi_j\bigl(F_{\mathbf{u}}(x)\bigr)
=
\sum_{I\in\mathcal{A}^\star}
\mathcal{L}_I\psi_j(x)S(\mathbf{u})^I_{0,H},
\label{eq:cf-expansion-proof}
\end{equation}
where the series converges absolutely and uniformly over
\[
(x,\mathbf{u})\in K\times\mathcal{U}_H.
\]

More precisely, under the conditions of
\cref{app:chen-fliess}, there are exactly \((m+1)^k\) words of length \(k\),

\[
\begin{aligned}
\sum_{\substack{I\in\mathcal{A}^\star\\|I|=k}}
\left|
\mathcal{L}_I\psi_j(x)S(\mathbf{u})^I_{0,H}
\right|
&\le
(m+1)^kC_KM_K^kk!
\frac{(\bar uH)^k}{k!} \\
&=
C_Kq_K^k.
\end{aligned}
\]
Consequently, if
\[
R_{j,N}(x,\mathbf{u})
:=
\sum_{\substack{I\in\mathcal{A}^\star\\|I|>N}}
\mathcal{L}_I\psi_j(x)S(\mathbf{u})^I_{0,H}
\]
denotes the truncation remainder, then
\begin{equation}
\max_{1\le j\le p}
\sup_{(x,\mathbf{u})\in K\times\mathcal{U}_H}
|R_{j,N}(x,\mathbf{u})|
\le
\frac{C_Kq_K^{N+1}}{1-q_K}.
\label{eq:cf-uniform-tail}
\end{equation}

Since \(q_K<1\), the right-hand side of
\cref{eq:cf-uniform-tail} converges to zero as \(N\to\infty\).
Therefore, there exists \(N\in\mathbb{N}\) such that
\begin{equation}
\frac{C_Kq_K^{N+1}}{1-q_K}
<
\varepsilon.
\label{eq:choose-N}
\end{equation}

Define the finite word set
\[
\mathcal{W}_N
:=
\mathcal{A}_{\le N}^\star
=
\left\{
I\in\mathcal{A}^\star:
|I|\le N
\right\},
\]
and let
\[
d_N
:=
|\mathcal{W}_N|
=
\sum_{k=0}^N(m+1)^k.
\]
Fix an ordering
\[
\mathcal{W}_N
=
\{I_1,\ldots,I_{d_N}\}.
\]

For each QoI \(\psi_j\), define the truncated Chen--Fliess coefficient
vector
\[
c_{j,N}(x)
:=
\begin{bmatrix}
\mathcal{L}_{I_1}\psi_j(x)\\
\vdots\\
\mathcal{L}_{I_{d_N}}\psi_j(x)
\end{bmatrix}
\in\mathbb{R}^{d_N}.
\]
Construct the shared representation
\begin{equation}
\lambda_N(x)
:=
\begin{bmatrix}
c_{1,N}(x)\\
\vdots\\
c_{p,N}(x)
\end{bmatrix}
\in\mathbb{R}^{pd_N}.
\label{eq:oracle-shared-rep}
\end{equation}

Define the coordinate projection
\[
P_N:\mathcal{S}\to\mathbb{R}^{d_N}
\]
by
\[
P_Ns
:=
\begin{bmatrix}
s^{I_1}\\
\vdots\\
s^{I_{d_N}}
\end{bmatrix}.
\]
In particular,
\[
P_NS(\mathbf{u})
=
\begin{bmatrix}
S(\mathbf{u})^{I_1}_{0,H}\\
\vdots\\
S(\mathbf{u})^{I_{d_N}}_{0,H}
\end{bmatrix}.
\]

For each \(j\in\{1,\ldots,p\}\), let
\[
E_j:\mathbb{R}^{d_N}\to\mathbb{R}^{pd_N}
\]
denote the canonical embedding into the \(j\)-th block:
\[
E_jz
=
\begin{bmatrix}
0\\
\vdots\\
z\\
\vdots\\
0
\end{bmatrix}.
\]
Define
\begin{equation}
L_{\psi_j,N}
:=
E_j\circ P_N:
\mathcal{S}\to\mathbb{R}^{pd_N}.
\label{eq:B-construction}
\end{equation}
Since both \(P_N\) and \(E_j\) are linear,
\(L_{\psi_j,N}\) is linear.

The corresponding action-conditioned measurement map is

$$
\gamma_{\psi_j,N}(\mathbf{u})
:=
L_{\psi_j,N}S(\mathbf{u})
=
E_jP_NS(\mathbf{u}).
$$

Using the standard Euclidean inner products on the corresponding spaces,

\[
\begin{aligned}
\left\langle
\lambda_N(x),
\gamma_{\psi_j,N}(\mathbf{u})
\right\rangle_{\mathbb{R}^{p d_N}}
&=
\left\langle
c_{j,N}(x),
P_N S(\mathbf{u})
\right\rangle_{\mathbb{R}^{d_N}} \\
&=
\sum_{r=1}^{d_N}
\mathcal{L}_{I_r}\psi_j(x)\,
S(\mathbf{u})^{I_r}_{0,H} \\
&=
\sum_{\substack{I\in\mathcal{A}^{\star} \\ |I|\le N}}
\mathcal{L}_I\psi_j(x)\,
S(\mathbf{u})^I_{0,H}.
\end{aligned}
\]

Combining this identity with \cref{eq:cf-expansion-proof} gives
\[
\psi_j(F_{\mathbf{u}}(x))
-
\left\langle
\lambda_N(x),
\gamma_{\psi_j,N}(\mathbf{u})
\right\rangle_{\mathbb{R}^{pd_N}}
=
R_{j,N}(x,\mathbf{u}).
\]
Hence, by \cref{eq:choose-N},
\[
\begin{aligned}
\max_{1\le j\le p}
\sup_{(x,\mathbf{u})\in K\times\mathcal{U}_H}
\left|
\psi_j(F_{\mathbf{u}}(x))
-
\left\langle
\lambda_N(x),
\gamma_{\psi_j,N}(\mathbf{u})
\right\rangle_{\mathbb{R}^{pd_N}}
\right|
&=
\max_{1\le j\le p}
\sup_{(x,\mathbf{u})\in K\times\mathcal{U}_H}
|R_{j,N}(x,\mathbf{u})| \\
&\le
\frac{C_Kq_K^{N+1}}{1-q_K} \\
&<
\varepsilon.
\end{aligned}
\]
Finally, choose
\[
d:=pd_N,
\qquad
\lambda:=\lambda_N,
\qquad
L_{\psi_j}:=L_{\psi_j,N}.
\]
Then
\[
\gamma_{\psi_j}
=
L_{\psi_j}\circ S:
\mathcal{U}_H\to\mathbb{R}^d
\]
is linear in the signature coordinates and satisfies
\[
\max_{1\le j\le p}
\sup_{(x,\mathbf{u})\in K\times\mathcal{U}_H}
\left|
\psi_j(F_{\mathbf{u}}(x))
-
\left\langle
\lambda(x),
\gamma_{\psi_j}(\mathbf{u})
\right\rangle_{\mathbb{R}^d}
\right|
<
\varepsilon.
\]
Since \(K\subset\mathcal{X}\) was arbitrary, the approximation holds
locally uniformly in the state variable.
\end{proof}

\subsection{Proof of~\Cref{thm:linear_steering}}
\label[appendixsubsection]{pf:linear_steering}
\begin{proof}
Fix \(x\in\mathcal{X}\), and write
\[
\lambda_0=\lambda(x),
\qquad
\lambda_\psi=\lambda_\psi(x)=L_\psi^*\lambda_0.
\]
We consider values of \(\alpha\) for which
\[
\lambda_\alpha
=
\lambda_0+\alpha\lambda_\psi
\in\Theta,
\]
so that \(\pi_{\lambda_\alpha}(\cdot\mid x)\) is well defined.

By definition of the adjoint \(L_\psi^*\),
\[
\left\langle
\lambda_0,
L_\psi S(\mathbf{u})
\right\rangle_{\mathcal{H}_S}
=
\left\langle
L_\psi^*\lambda_0,
S(\mathbf{u})
\right\rangle_{\mathcal{H}_S}
=
\left\langle
\lambda_\psi,
S(\mathbf{u})
\right\rangle_{\mathcal{H}_S}.
\]
Since \(\psi\) admits an exact linear probe at \(x\),
\[
\psi\bigl(F_{\mathbf{u}}(x)\bigr)
=
\left\langle
\lambda_0,
L_\psi S(\mathbf{u})
\right\rangle_{\mathcal{H}_S},
\qquad
\forall\,\mathbf{u}\in\mathcal{U}_H,
\]
and therefore
\begin{equation}
\psi\bigl(F_{\mathbf{u}}(x)\bigr)
=
\left\langle
\lambda_\psi,
S(\mathbf{u})
\right\rangle_{\mathcal{H}_S}.
\label{eq:steering-statistic-identity}
\end{equation}
Thus the propagated QoI coincides exactly with the sufficient-statistic
direction induced by \(\lambda_\psi\).

From \cref{eq:steering-statistic-identity},
\[
\begin{aligned}
J_{\psi,x}(\alpha)
&=
\mathbb{E}_{\mathbf{u}\sim
\pi_{\lambda_\alpha}(\cdot\mid x)}
\left[
\left\langle
\lambda_\psi,
S(\mathbf{u})
\right\rangle_{\mathcal{H}_S}
\right]
\\
&=
\left\langle
\lambda_\psi,
\mathbb{E}_{\mathbf{u}\sim
\pi_{\lambda_\alpha}(\cdot\mid x)}
\left[
S(\mathbf{u})
\right]
\right\rangle_{\mathcal{H}_S}.
\end{aligned}
\]
For the signature exponential family, the dual coordinate is
\[
\varphi(\lambda)
= \nabla A(\lambda) = \mathbb{E}_{\mathbf{u}\sim
\pi_{\lambda}(\cdot\mid x)}
\left[
S(\mathbf{u})
\right].
\]
Hence
\begin{equation}
J_{\psi,x}(\alpha)
=
\left\langle
\lambda_\psi,
\varphi(\lambda_\alpha)
\right\rangle_{\mathcal{H}_S}.
\label{eq:J-dual-coordinate}
\end{equation}

Since the steering path is affine, i.e.,
$\lambda_\alpha
=
\lambda_0+\alpha\lambda_\psi$,
its derivative with respect to \(\alpha\) is
\[
\frac{\mathrm{d}}{\mathrm{d}\alpha}\lambda_\alpha
=
\lambda_\psi.
\]
Differentiating \cref{eq:J-dual-coordinate} along this path and applying
the chain rule gives
\[
\begin{aligned}
\frac{\mathrm{d}}{\mathrm{d}\alpha}
J_{\psi,x}(\alpha)
&=
\left\langle
\lambda_\psi,
D\varphi(\lambda_\alpha)
\left[
\frac{\mathrm{d}\lambda_\alpha}{\mathrm{d}\alpha}
\right]
\right\rangle_{\mathcal{H}_S}
\\
&=
\left\langle
\lambda_\psi,
D\varphi(\lambda_\alpha)
\left[
\lambda_\psi
\right]
\right\rangle_{\mathcal{H}_S}.
\end{aligned}
\]
Because $\varphi(\lambda)=\nabla A(\lambda)$,
we have
$D\varphi(\lambda)
=
\nabla^2 A(\lambda)$,
and therefore
\begin{equation}
\frac{\mathrm{d}}{\mathrm{d}\alpha}
J_{\psi,x}(\alpha)
=
\left\langle
\lambda_\psi,
\nabla^2 A(\lambda_\alpha)
[\lambda_\psi]
\right\rangle_{\mathcal{H}_S}.
\label{eq:J-hessian}
\end{equation}

It remains to identify the quadratic form in
\cref{eq:J-hessian}. For the exponential family,
the Hessian of the log-partition function is the covariance operator of
the sufficient statistic. More precisely, for any
\(\eta,\zeta\in\mathcal{H}_S\),
\[
\left\langle
\eta,
\nabla^2 A(\lambda)[\zeta]
\right\rangle_{\mathcal{H}_S}
=
\operatorname{Cov}_{\mathbf{u}\sim
\pi_{\lambda}(\cdot\mid x)}
\left(
\left\langle
\eta,S(\mathbf{u})
\right\rangle,
\left\langle
\zeta,S(\mathbf{u})
\right\rangle_{\mathcal{H}_S}
\right).
\]
Taking
\[
\eta=\zeta=\lambda_\psi
\]
yields
\[
\left\langle
\lambda_\psi,
\nabla^2 A(\lambda_\alpha)[\lambda_\psi]
\right\rangle_{\mathcal{H}_S}
=
\operatorname{Var}_{\mathbf{u}\sim
\pi_{\lambda_\alpha}(\cdot\mid x)}
\left[
\left\langle
\lambda_\psi,
S(\mathbf{u})
\right\rangle_{\mathcal{H}_S}
\right].
\]
Combining this with
\cref{eq:steering-statistic-identity} gives
\[
\frac{\mathrm{d}}{\mathrm{d}\alpha}
J_{\psi,x}(\alpha)
=
\operatorname{Var}_{\mathbf{u}\sim
\pi_{\lambda_\alpha}(\cdot\mid x)}
\left[
\psi\bigl(F_{\mathbf{u}}(x)\bigr)
\right].
\]
Since variance is nonnegative,
\[
\frac{\mathrm{d}}{\mathrm{d}\alpha}
J_{\psi,x}(\alpha)
\ge 0.
\]

In particular, for any
\(\alpha_1\le \alpha_2\) such that
\(\lambda_\alpha\in\Theta\) for all
\(\alpha\in[\alpha_1,\alpha_2]\),
\[
\begin{aligned}
J_{\psi,x}(\alpha_2)
-
J_{\psi,x}(\alpha_1)
&=
\int_{\alpha_1}^{\alpha_2}
\frac{\mathrm{d}}{\mathrm{d}\alpha}
J_{\psi,x}(\alpha)\,\mathrm{d}\alpha
\\
&=
\int_{\alpha_1}^{\alpha_2}
\operatorname{Var}_{\mathbf{u}\sim
\pi_{\lambda_\alpha}(\cdot\mid x)}
\left[
\psi\bigl(F_{\mathbf{u}}(x)\bigr)
\right]
\,\mathrm{d}\alpha
\\
&\ge 0.
\end{aligned}
\]
Therefore \(J_{\psi,x}(\alpha)\) is nondecreasing along the steering path
\(\lambda_\alpha=\lambda(x)+\alpha\lambda_\psi(x)\).
\end{proof}

\newpage

\section{Detailed Derivations}
\subsection{KL Projection and the Induced Steering Path}
\label[appendixsubsection]{app:kl-projection}

We provide a derivation of the reverse-KL projection used in~\cref{eq:linear-path}.
Consider the exponential family induced by the base measure $\mu$
\begin{equation*}
    \frac{\mathrm d\pi_\lambda}{\mathrm d\mu}(\mathbf u)
    =
    \exp\!\left(
        \langle \lambda, S(\mathbf u) \rangle_{\mathcal H_S} - A(\lambda)
    \right),
    \qquad
    \lambda \in \Theta \subseteq \mathcal H_S,
\end{equation*}
where $A(\lambda)=\log\mathbb E_{\mathbf  u\sim\pi_0}\left[e^{\langle \lambda,S(\mathbf  u)\rangle_{\mathcal H_S}}\right]$
is the log-partition function. We assume that the exponential family is regular and minimal, so that $A$ is strictly convex on $\Theta$. Its expectation coordinate is
\begin{equation*}
    \varphi(\lambda)=
    \nabla A(\lambda)
    =
    \mathbb E_{\mathbf u\sim\pi_\lambda}[S(\mathbf u)],
    \qquad
    \Phi = \varphi(\Theta).
\end{equation*}

For a fixed state $x\in\mathcal X$, recall the state-conditioned steering direction
\begin{equation*}
    \lambda_\psi(x)
    =
    L_\psi^* \lambda(x)
    \in \mathcal H_S.
\end{equation*}
By construction, the expected propagated QoI is affine in the dual coordinate:
\begin{align*}
    \mathbb E_{\mathbf u\sim\pi_\lambda}
    \bigl[\psi(F_{\mathbf u}(x))\bigr]
    &=
    \left\langle
        \lambda(x),
        L_\psi
        \mathbb E_{\mathbf u\sim\pi_\lambda}[S(\mathbf u)]
    \right\rangle_{\mathcal H_S}
    \\
    &=
    \left\langle
        L_\psi^*\lambda(x),
        \varphi(\lambda)
    \right\rangle_{\mathcal H_S}
    \\
    &=
    \left\langle
        \lambda_\psi(x),
        \varphi(\lambda)
    \right\rangle_{\mathcal H_S}.
\end{align*}
Hence, a target expected QoI level $c\in\mathbb R$ defines the affine hyperplane
\begin{equation*}
    \Phi_{\psi,x}(c)
    =
    \left\{
        \varphi\in\Phi:
        \langle \lambda_\psi(x),\varphi\rangle_{\mathcal H_S}=c
    \right\}.
\end{equation*}

\paragraph{KL divergence as a dual Bregman divergence.}
For any $\lambda,\lambda_0\in\Theta$, the KL divergence between two members of the same exponential family satisfies
\begin{align*}
    D_{\mathrm{KL}}(\pi_\lambda\|\pi_{\lambda_0})
    &=
    \mathbb E_{\pi_\lambda}
    \left[
        \log
        \frac{\mathrm{d}\pi_\lambda}{\mathrm{d}\pi_{\lambda_0}}(\mathbf u)
    \right]
    \\
    &=
    \left\langle
        \lambda-\lambda_0,
        \varphi(\lambda)
    \right\rangle_{\mathcal H_S}
    -
    A(\lambda)
    +
    A(\lambda_0)
    \\
    &=    D_A(\lambda_0,\lambda),
\end{align*}
where $D_A(\lambda_0,\lambda):=A(\lambda_0)-A(\lambda)-\left\langle\nabla A(\lambda), \lambda_0-\lambda \right\rangle$ is the Bregman divergence generated by $A$.

Let $A^*$ denote the convex conjugate of $A$,
\begin{equation*}
    A^*(\varphi)
    :=
    \sup_{\lambda\in\Theta}
    \left\{
        \langle \lambda,\varphi\rangle_{\mathcal H_S}-A(\lambda)
    \right\}.
\end{equation*}
The Legendre correspondence gives
\begin{equation*}
    \varphi=\nabla A(\lambda),
    \qquad
    \lambda=\nabla A^*(\varphi),
\end{equation*}
and therefore
\begin{equation}
    D_{\mathrm{KL}}(\pi_\lambda\|\pi_{\lambda_0})
    =
    D_{A^*}
    \bigl(
        \varphi(\lambda),
        \varphi(\lambda_0)
    \bigr).
    \label{eq:kl-dual-bregman}
\end{equation}

\paragraph{Projection onto a constant-QoI hyperplane.}
We now set $\lambda_0=\lambda(x)$ and consider
\begin{equation*}
    \widehat\lambda
    \in
    \arg\min_{\lambda:\,
        \varphi(\lambda)\in\Phi_{\psi,x}(c)}
    D_{\mathrm{KL}}
    \bigl(
        \pi_\lambda\|
        \pi_{\lambda(x)}
    \bigr).
\end{equation*}
Using~\cref{eq:kl-dual-bregman}, this is equivalently the Bregman projection
\begin{equation*}
    \widehat\varphi
    \in
    \arg\min_{\varphi\in\Phi}
    D_{A^*}
    \bigl(
        \varphi,\varphi(\lambda(x))
    \bigr)
    \quad
    \text{subject to}
    \quad
    \langle\lambda_\psi(x),\varphi\rangle_{\mathcal H_S}=c.
\end{equation*}

Introduce a Lagrange multiplier $\beta\in\mathbb R$:
\begin{equation*}
    \mathcal L(\varphi,\beta)
    =
    D_{A^*}
    \bigl(
        \varphi,\varphi(\lambda(x))
    \bigr)
    +
    \beta
    \left(
        \langle\lambda_\psi(x),\varphi\rangle_{\mathcal H_S}-c
    \right).
\end{equation*}
The first-order optimality condition with respect to $\varphi$ is
\begin{equation*}
    \nabla A^*(\widehat\varphi)
    -
    \nabla A^*
    \bigl(
        \varphi(\lambda(x))
    \bigr)
    +
    \beta\lambda_\psi(x)
    =
    0.
\end{equation*}
By Legendre duality,
\begin{equation*}
    \nabla A^*(\widehat\varphi)
    =
    \widehat\lambda,
    \qquad
    \nabla A^*
    \bigl(
        \varphi(\lambda(x))
    \bigr)
    =
    \lambda(x),
\end{equation*}
and hence $\widehat\lambda=\lambda(x)-\beta\lambda_\psi(x)$. Finally, writing $\alpha=-\beta$ gives
\begin{equation*}
    \widehat\lambda
    =
    \lambda(x)+\alpha\lambda_\psi(x).
\end{equation*}


\subsection{Oracle Representation and Measurement Map}
\label[appendixsubsection]{app:oracle-derivation}

We first write the dynamics in control-affine form. Since the system is driftless,
\[
\dot{x}
=
f_1(x)v + f_2(x)\omega,
\qquad
f_1(x)
=
\begin{bmatrix}
1\\
\theta\\
0
\end{bmatrix},
\qquad
f_2(x)
=
\begin{bmatrix}
0\\
0\\
1
\end{bmatrix}.
\]

\paragraph{Observable and exact oracle representation.}
We consider the QoI $\psi(x)=p_y$. The first-order Lie derivatives are
\[
\mathcal L_{f_1}\psi(x)=\nabla\psi(x)^\top f_1(x)=\theta,
\qquad
\mathcal L_{f_2}\psi(x)=\nabla\psi(x)^\top f_2(x)=0.
\]
At second order,
\[
\mathcal L_{f_2}\mathcal L_{f_1}\psi(x)
=\mathcal L_{f_2}\theta=1,
\]
whereas
\[
\mathcal L_{f_1}\mathcal L_{f_1}\psi
=
\mathcal L_{f_1}\mathcal L_{f_2}\psi
=
\mathcal L_{f_2}\mathcal L_{f_2}\psi
=0.
\]
Since the only second-order iterated Lie derivative that is not identically zero is constant, all iterated Lie derivatives of order three or higher vanish identically.

Strictly speaking, under the augmented formulation with $u^0 \equiv 1$, the input should be written as $\mathbf{u}=(1,v,\omega)$. However, since the time component plays no role in the control parametrization, we suppress it for notational simplicity and write $\mathcal A=\{1, 2\}$ as the control alphabet, with $1$ and $2$ corresponding to $v$ and $\omega$, respectively, and let $\mathcal A^\star$ denote the set of all finite words over $\mathcal A$, including the empty word $\emptyset$. We use the convention
\[
\mathcal L_{(i_1,\ldots,i_k)}\psi
:=
\mathcal L_{f_{i_1}}\cdots\mathcal L_{f_{i_k}}\psi,
\qquad
\mathcal L_\emptyset\psi:=\psi,
\]
Let $S(\mathbf u)=\bigl(S(\mathbf{u})^I_{0,H}\bigr)_{I\in\mathcal A^\star}$ denote the full signature of the control, with $S^\emptyset_{0,H}(\mathbf u)=1$. The only words whose Chen--Fliess coefficient functions are not identically zero are
\[
\mathcal W_\psi=\{\emptyset,1,21\}.
\]
In particular,
\[
S(\mathbf u)^1_{0,H}=\int_0^H v(t)\,\mathrm{d}t,
\qquad
S(\mathbf u)^{21}_{0,H}
=
\int_0^H\int_0^{t_2}\omega(t_1)v(t_2)\,\mathrm{d}t_1\,\mathrm{d}t_2.
\]
Hence the Chen--Fliess expansion terminates exactly:
\[
\psi(F_{\mathbf u}(x))
=
\sum_{I\in\mathcal A^\star}
\mathcal L_I\psi(x)S(\mathbf u)^I_{0,H}
=
p_y+\theta S(\mathbf u)^1_{0,H}+S^{21}_{0,H}(\mathbf u).
\]
Define
\[
\lambda_\psi(x)
=
\bigl(\lambda_\psi^I(x)\bigr)_{I\in\mathcal A^\star},
\qquad
\lambda_\psi^I(x):=\mathcal L_I\psi(x).
\]
Since $\lambda_\psi(x)$ has finite support $\mathcal W_\psi$, its pairing with the full signature is a finite sum:
\[
\psi(F_{\mathbf u}(x))
=
\langle \lambda_\psi(x),S(\mathbf u)\rangle_{\mathcal H_S}.
\]
The only coordinates of $\lambda_\psi(x)$ that are not identically zero are
\[
\lambda_\psi^\emptyset(x)=p_y,
\qquad
\lambda_\psi^1(x)=\theta,
\qquad
\lambda_\psi^{21}(x)=1.
\]

Let $\Pi_{\mathcal W_\psi}$ denote projection onto these three active coordinates. Then
\[
\Pi_{\mathcal W_\psi}S(\mathbf u)
=
\begin{bmatrix}
1\\
S(\mathbf u)^1_{0,H}\\
S(\mathbf u)^{21}_{0,H}
\end{bmatrix},
\qquad
\Pi_{\mathcal W_\psi}\lambda_\psi(x)
=
\begin{bmatrix}
p_y\\
\theta\\
1
\end{bmatrix}.
\]
With a slight abuse of notation, whenever only these active coordinates are considered, we write
\[
S(\mathbf u)
=
\begin{bmatrix}
1\\
S(\mathbf u)^1_{0,H}\\
S(\mathbf u)^{21}_{0,H}
\end{bmatrix},
\qquad
\lambda_\psi(x)
=
\begin{bmatrix}
p_y\\
\theta\\
1
\end{bmatrix}.
\]

\paragraph{Signature measurement map $L_{\psi}$.}
For a learned VLA representation \(\lambda\), the particular realization of \(L_{\psi}\) may depend on how the VLA model is trained, and more specifically, on how much physical information is encoded in \(\lambda\).  In particular, our oracle experiment does not assume or learn a specific realization of \(L_{\psi}\); rather, \(L_{\psi}\) is treated as given, and the experiment is intended to demonstrate the existence of such a linear correspondence. More generally, how the representation dimensionality scales with the number of QoIs, as well as possible intersections among the associated QoI subspaces, requires further consideration.

\newpage

\section{Existing Empirical Results within the Proposed Framework}
\label{app:empirical}

In this section, we show how existing empirical results on linear probing of physical quantities in VLA models~\citep{lu2025probing,zhang2026frozen,garg2026features} can be interpreted as special cases of the proposed framework.

Assume the robot state can be written as $x=(p_x,p_y,p_z,\theta_r, \theta_p, \theta_y, g)$, where $(p_x,p_y,p_z) \in \mathbb R^3$ represent Cartesian position, $(\theta_r, \theta_p, \theta_y)\in [0, 2\pi)^3$ encode roll, pitch, and yaw orientation, and $g\in [0,1]$ represents normalized gripper aperture. Time-augmented robot action trajectory $\mathbf u=(t,\Delta x) \in \mathcal{U}_H$ corresponds to relative displacements in this state space. 

Existing VLA probing studies primarily examine whether two types of physical quantities $\psi:\mathcal{X} \to \mathbb R$ can be linearly decoded from frozen representations: (i) position-level quantities, such as \(p_x,p_y,p_z,\theta_r,\theta_p,\theta_y\), and (ii) velocity-level quantities, such as \(\Delta p_x,\Delta p_y,\Delta p_z,\Delta\theta_r,\Delta\theta_p,\Delta\theta_y\). These quantities are typically probed using a fixed linear readout,
\[
\psi(x)
\approx
\left\langle W_{\psi}\lambda(x),\,  w_{\psi}\right\rangle,
\]
where \(W_\psi\) is a learned linear map and \(w_\psi\) denotes the corresponding readout direction in the transformed representation space. Note that our formulation has considered the evolution of the same physical quantity under a candidate action trajectory:
\[
\psi(F_{\mathbf u}(x))
\approx
\left\langle
\lambda(x),\gamma_\psi(\mathbf u)
\right\rangle=\left\langle \lambda(x), L_\psi S(\mathbf{u}) \right\rangle = \left\langle L_\psi^* \lambda(x), S(\mathbf u) \right\rangle.
\]
At a high level, when \(S(\mathbf u)\) contains sufficiently rich nonzero coordinates, the role previously played by a single readout direction \(w_\psi\) can be generalized across multiple signature coordinates, yielding a substantially richer action-conditioned readout.

\subsection{Probing Position Vectors}
We first consider snapshot-level physical quantities, such as the current position \((p_x,p_y,p_z)\) and orientation
\((\theta_r,\theta_p,\theta_y)\), where each QoI \(\psi\) is associated with a fixed linear probe.

Under our formulation, 
exact static probing is recovered in the zero-horizon case $H=0$, for which only the empty-word coordinate remains. 
A closely related short-horizon setting is obtained for small $H>0$ with no nontrivial control input:
\[
S(\mathbf{u}) =
\bigl(S(\mathbf{u})_{0,H}^{\emptyset}=1,\,S(\mathbf{u})_{0,H}^0=H, 0, \cdots, S(\mathbf{u})_{0,H}^{00} = \frac{1}{2}H^2,\cdots,\,0,\cdots\bigr)
\in\mathcal{S}.
\]
Consequently, only the empty-word coordinate and the pure time-word coordinates of the form \(0\cdots0\in\mathcal W\) contribute to the probe. For a fixed horizon \(H\), these coordinates scale as \(H^k/k!\) with the word length \(k\), and hence decay rapidly at higher orders. For small $H$, it is therefore natural to retain only the first \(Q\) relevant signature coordinates, so that only the corresponding columns of \(L_\psi\),
\[
(L_\psi)_{:,1:Q},
\]
are needed to recover the static physical quantity. Specifically,
\[
\psi(x)
=
\left\langle \lambda(x),\gamma_\psi \right\rangle
=
\left\langle \lambda(x),L_\psi S(\mathbf u)\right\rangle
\approx
\left\langle
\lambda(x),
(L_\psi)_{:,1:Q}\,\bar S(\mathbf u)
\right\rangle,
\]
where \(\bar S(\mathbf u)\in\mathbb R^Q\) collects the retained signature coordinates.

Equivalently, by setting
\[
W_\psi := (L_\psi)_{:,1:Q}^{*},
\qquad
w_\psi := \bar S(\mathbf u),
\]
we obtain
\[
\psi(x)
\approx
\left\langle
W_\psi\lambda(x), \,w_\psi
\right\rangle,
\]
which recovers the conventional linear probe for current position or orientation as a special case of our action-conditioned formulation.

\subsection{Probing Velocity Vectors}
Next, consider velocity-level quantities such as
\(\Delta p_x,\Delta p_y,\Delta p_z\) and
\(\Delta\theta_r,\Delta\theta_p,\Delta\theta_y\).
We interpret these quantities as short-horizon changes of the corresponding
physical states. Specifically, for a QoI \(\psi\), define
\[
\Delta_H \psi(x;\mathbf u)
:=
\psi(F_{\mathbf u}(x))-\psi(x),
\]
where \(\mathbf u\) is an action trajectory applied over the horizon
\([0,H]\).

For example, consider a constant action applied only along the first action
coordinate,
\[
\mathbf u(t)
=
(1,\Delta,0,\ldots,0),
\qquad t\in[0,H],
\]
where the first coordinate corresponds to the augmented time channel
\(u^0(t)\equiv1\), and \(u^1(t)\equiv\Delta\).
Its signature begins as
\[
\begin{aligned}
S(\mathbf u)
=
\bigg(
&1,\;
H,\;
\Delta H,\;
\frac{H^2}{2},\;
\frac{\Delta H^2}{2},\;
\frac{\Delta H^2}{2},\;
\frac{\Delta^2H^2}{2},
\ldots
\bigg).
\end{aligned}
\]
More generally, for any word \(I\in\{0,1\}^k\),
\[
S^I_{0,H}(\mathbf u)
=
\frac{H^k}{k!}\Delta^{n_1(w)},
\]
where \(n_1(w)\) denotes the number of occurrences of the active action
letter \(1\).

Unlike the static case, the signature now contains nonzero coordinates
involving the action letter \(1\). Hence, the probe can exploit both
time-only and action-dependent signature coordinates. Retaining the first
\(Q\) relevant coordinates gives
\[
\psi(F_{\mathbf u}(x))
=
\left\langle
\lambda(x),L_\psi S(\mathbf u)
\right\rangle
\approx
\left\langle
\lambda(x),
(L_\psi)_{:,\mathcal I_Q}\,
\bar S(\mathbf u)
\right\rangle,
\]
where \(\mathcal I_Q\) indexes the retained nonzero signature coordinates
and \(\bar S(\mathbf u)\in\mathbb R^Q\) collects their values.

For a fixed short-horizon action trajectory \(\mathbf u\), the vector
\(\bar S(\mathbf u)\) is fixed. Therefore, by defining
\[
W_\psi
:=
(L_\psi)_{:,\mathcal I_Q}^{*},
\qquad
w_\psi
:=
\bar S(\mathbf u),
\]
we recover
\[
\psi(F_{\mathbf u}(x))
\approx
\left\langle
W_\psi\lambda(x),w_\psi
\right\rangle.
\]

Accordingly, the corresponding finite-horizon change
\[
\Delta_H\psi(x;\mathbf u)
=
\psi(F_{\mathbf u}(x))-\psi(x)
\]
also admits a fixed linear readout from \(\lambda(x)\) for a fixed
trajectory \(\mathbf u\). Thus, conventional probing of velocity-level
quantities such as \(\Delta p_x\) or \(\Delta\theta_y\) can be interpreted
as a special case of our action-conditioned formulation in which the
candidate trajectory and horizon are fixed. Our formulation generalizes
this setting by allowing \(\mathbf u\) to vary, so that the probe direction
changes with the contemplated action trajectory.

\end{document}